\documentclass{article}

\usepackage{iclr2027_conference,times}
\usepackage{microtype}
\usepackage{amsmath,amssymb,amsthm,mathtools}
\usepackage{booktabs}
\usepackage{array}
\usepackage{wrapfig}
\usepackage{enumitem}
\usepackage{algorithm}
\usepackage{algpseudocode}
\makeatletter
\renewcommand{\theALG@line}{\thealgorithm.\arabic{ALG@line}}
\makeatother
\usepackage{xcolor}
\usepackage{tikz}
\usetikzlibrary{arrows.meta,backgrounds,calc,fit,positioning}
\usepackage{hyperref}
\usepackage[nameinlink,capitalize,noabbrev]{cleveref}
\makeatletter
\providecommand*{\theHALG@line}{}
\renewcommand*{\theHALG@line}{\thealgorithm.\arabic{ALG@line}}
\makeatother

\definecolor{linkblue}{RGB}{32,83,164}
\definecolor{sdblue}{RGB}{54,105,179}
\definecolor{sdorange}{RGB}{230,126,34}
\definecolor{sdgreen}{RGB}{48,145,106}
\definecolor{sdgray}{RGB}{145,145,145}
\hypersetup{
  colorlinks=true,
  linkcolor=linkblue,
  citecolor=linkblue,
  urlcolor=linkblue
}

\newtheorem{theorem}{Theorem}
\newtheorem{proposition}[theorem]{Proposition}
\newtheorem{corollary}[theorem]{Corollary}

\theoremstyle{definition}

\theoremstyle{remark}

\crefname{assumption}{assumption}{assumptions}
\Crefname{assumption}{Assumption}{Assumptions}
\crefname{algorithm}{algorithm}{algorithms}
\Crefname{algorithm}{Algorithm}{Algorithms}

\algrenewcommand\algorithmicrequire{\textbf{Input:}}
\algrenewcommand\algorithmicensure{\textbf{Output:}}

\newcommand{\SourceDist}{P}
\newcommand{\TargetDist}{Q}

\newcommand{\Ind}{\mathbf{1}}

\title{When Does a Skill Add Value?\\
Task-Conditional Gain Prediction\\
for Selective Skill Use}
\author{Anjie Xu$^{1,2,3}$ \quad Zhiyu Zhang$^{7}$ \quad Ruiqing Ding$^{4,5}$ \quad Fengli Xu$^{3,6}$ \quad Leye Wang$^{1,2}$\\
{\normalfont\small $^{1}$Key Lab of High Confidence Software Technologies (Peking University), Ministry of Education, China}\\
{\normalfont\small $^{2}$School of Computer Science, Peking University, Beijing, China}\\
{\normalfont\small $^{3}$Zhongguancun Academy, Beijing, China}\\
{\normalfont\small $^{4}$Key Laboratory of Process Optimization and Intelligent Decision-making, Ministry of Education, China}\\
{\normalfont\small $^{5}$School of Management, Hefei University of Technology, Anhui, China}\\
{\normalfont\small $^{6}$Department of Electronic Engineering, BNRist, Tsinghua University, Beijing, China}\\
{\normalfont\small $^{7}$Yuanpei College, Peking University, Beijing, China}}

\hypersetup{
  pdftitle={When Does a Skill Add Value? Task-Conditional Gain Prediction for Selective Skill Use},
  pdfauthor={Anjie Xu, Zhiyu Zhang, Ruiqing Ding, Fengli Xu, Leye Wang},
  pdfsubject={Task-conditional skill-gain prediction, selective skill use, and full-inventory benchmark evaluation}
}

\iclrfinalcopy

\begin{document}
\raggedbottom
\maketitle
\fancyhead{}
\renewcommand{\headrulewidth}{0pt}

\begin{abstract}
Agent skills are expected to improve task performance. Yet we find that they
often provide no benefit, and can even hurt performance while incurring
additional token costs. Can we predict whether a skill will help before the
agent acts? We introduce \textsc{SkillDelta}, a framework for predicting
task-conditional skill gains from paired executions of the same agent with
and without the skill. A local predictor transfers these historical gains
to new tasks without retraining the agent. Under explicit transfer assumptions,
our analysis links support coverage, representation mismatch, and execution noise to prediction error
and decision regret. Across five benchmarks and three target agents, paired
history improves observed-gain ranking over skill-assisted outcomes alone in
12 of 15 settings. At matched expected skill-use rates, \textsc{SkillDelta}
improves success over random activation in all 15 settings, with an average
absolute gain of 4.3\%. Most of this advantage comes from allocation across
task groups. Evidence for additional within-group selection value is strongest
on ToolQA and weaker elsewhere. Code is available at
\href{https://github.com/TankTechnology/skilldelta}{this public repository}.

\end{abstract}

\section{Introduction}
\label{sec:introduction}

Agent skills promise to improve task execution through reusable procedures
without retraining the agent \citep{zhang2025agentskills}. Yet relevant
instructions can duplicate existing ability, reduce success, or add tokens
without benefit. A tool call invokes an operation; a skill supplies a workflow
that can guide several tools. Models can learn tool use during training
\citep{schick2023toolformer,eleti2023functioncalling}, whereas users can supply
skills afterward, with descriptions that may overstate or misrepresent their
benefits. \emph{Can we predict whether a supplied skill will improve the agent's
chance of success on its current task, before execution?}

Existing work measures skill utility
\citep{li2026skillsbench,han2026sweskillsbench,liu2026wildskills}, improves
retrieval \citep{zheng2026skillrouter}, and optimizes skill content
\citep{yang2026skillopt}. SRA-Bench further shows that agents struggle to judge
when skills are needed \citep{su2026sra}. However, finding a relevant skill or
succeeding with it does not establish benefit: the agent may already solve
the task without it. Model routers compare answering models
\citep{ong2024routellm,somerstep2025carrot}; our question holds the agent fixed
and asks how its success changes when given a supplied skill. This requires
predicting \emph{incremental gain} before either outcome is observed.

Predicting benefit requires distinguishing an intervention's effect from its
outcome \citep{shalit2017estimating}. Agent tasks let us measure this difference
through \emph{paired execution history}: running the same agent under use and
skip conditions, with repetition improving precision. Deployment raises a
generalization question: when can gains measured on historical tasks predict
benefit on a new task? Topic similarity alone is insufficient; useful history
must cover the target, and the representation must capture similarity in skill
benefit.

We develop \textsc{SkillDelta}, a local predictor that aggregates paired gains
from eligible historical tasks without retraining the agent. Skill or family
information restricts the support, and fixed task embeddings identify neighbors.
Building on local regression
analysis \citep{kpotufe2011knn}, we connect support coverage, representation
mismatch, and execution noise to prediction error and decision loss under
explicit transfer assumptions. The analysis explains why expanding a
history bank and repeating its executions address different sources of error.
An accompanying DeepSeek Harness plugin gates skill injection before the first
model step.

We test both the value of paired evidence and the decisions it supports
across five benchmarks and three agents. Paired history improves observed-gain
ranking over skill-assisted outcomes alone in 12 of 15 settings. At matched
expected skill-use rates, selective use yields an average absolute success-rate
gain of 4.3\% over random activation (\cref{fig:matched-rate-three-stack}).
This advantage combines allocation across task groups with selection within
groups; the clearest within-group benefits occur on ToolQA
(\cref{fig:selection-components}). Evidence-scaling diagnostics examine how
support, neighborhood size, and repetitions affect prediction error.

\noindent\textbf{Contributions.}
\begin{enumerate}[leftmargin=*,itemsep=0pt,parsep=0pt,topsep=2pt]
  \item We formulate pre-execution skill-gain prediction, distinguishing
    incremental benefit from skill-assisted success, and characterize how
    coverage, representation mismatch, and execution noise constrain
    transfer from paired history.
  \item We introduce \textsc{SkillDelta}, a local gain estimator that guides
    skill use before execution without retraining the agent, and provide a
    DeepSeek Harness plugin for integration.
  \item Paired history enables higher success than random activation at
    matched expected skill-use rates across five benchmarks and three agents,
    chiefly through better allocation across task groups.
\end{enumerate}

\noindent Our implementation and data-collection code, including a \textsc{SkillDelta}
plugin for DeepSeek Harness, are available in the
\href{https://github.com/TankTechnology/skilldelta}{public repository}.

\section{Problem Definition: Task-Conditional Skill Gain}
\label{sec:problem-definition}

\begin{figure}[t]
  \centering
  \includegraphics[width=\linewidth]{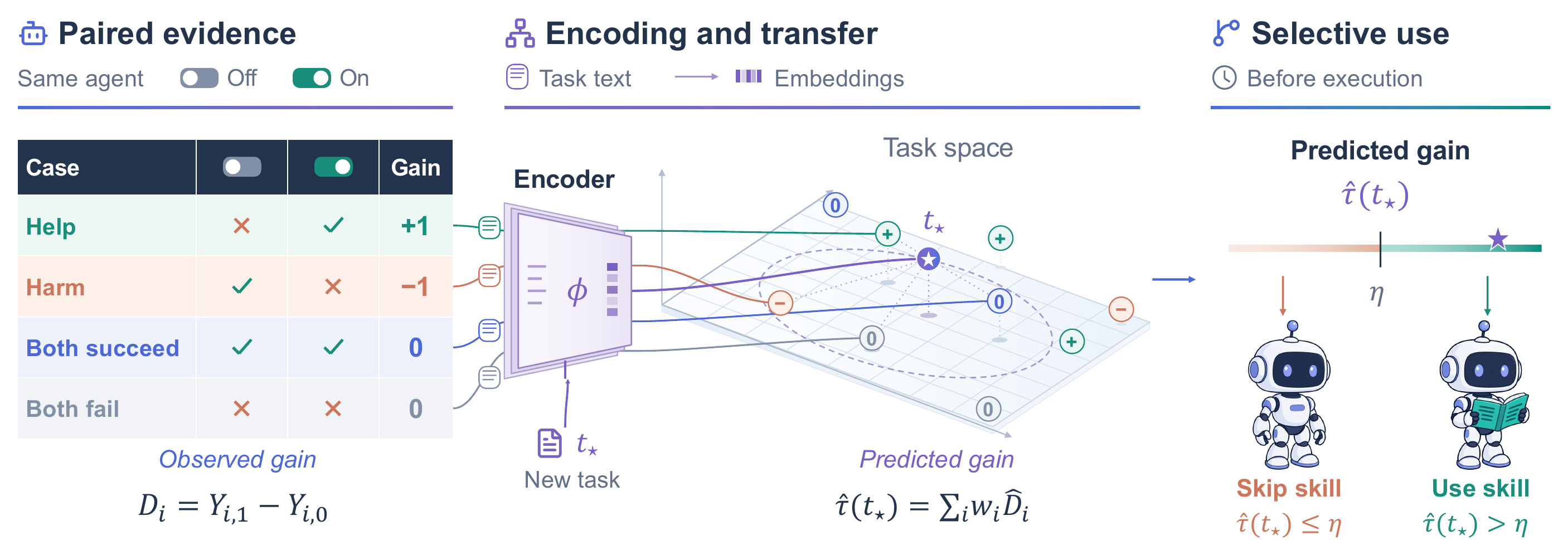}
  \caption{\textsc{SkillDelta} predicts skill gains before execution.
  Paired outcomes provide observed gains, which are transferred through
  nearby task embeddings to predict gain on a new task. Thresholding this
  prediction determines whether to use the skill. Only task text is encoded;
  the geometry is schematic.}
  \label{fig:skilldelta-pipeline}
\end{figure}

\subsection{Expected gain and paired observations}

Let $t\in\mathcal T$ be a task, $s=s(t)$ its supplied applicable skill or fixed
skill bundle, and $s_0$ the no-skill condition. Both conditions use the same
agent and execution protocol, differing only in skill injection. For a binary
success outcome $Y(t)$, define
\begin{equation}
  \begin{aligned}
    \mu_1(t,s)   &= \mathbb E[Y(t)\mid t,s]
                    = \Pr(Y(t)=1\mid t,s),\\
    \mu_0(t,s_0) &= \mathbb E[Y(t)\mid t,s_0]
                    = \Pr(Y(t)=1\mid t,s_0),\\
    \tau(t,s)    &= \mu_1(t,s)-\mu_0(t,s_0).
  \end{aligned}
  \label{eq:conditional-effect}
\end{equation}
The subscripts label execution conditions. The gain $\tau(t,s)$ is the change
in the agent's success probability when supplied with the skill.
When the supplied skill is fixed by context, we write
$\mu_1(t)\equiv\mu_1(t,s)$, $\mu_0(t)\equiv\mu_0(t,s_0)$,
$\tau(t)\equiv\tau(t,s)$, and $\widehat\pi(t)\equiv\widehat\pi(t,s)$.

For example, two tasks may each have 80\% success with the skill. If their
no-skill success probabilities are 80\% and 40\%, their gains are zero and
40\%, respectively. Predicting skill-assisted success alone treats these
tasks alike, although only the second benefits from the skill.

For support task $t_i$ with skill $s_i=s(t_i)$, the success fractions
$\widehat\mu_1(t_i,s_i)$ and $\widehat\mu_0(t_i,s_0)$ from $r_i$ repetitions
per condition estimate the probabilities in \cref{eq:conditional-effect}.
Their difference is the observed paired gain:
\begin{equation}
  \widehat D_i=\widehat\mu_1(t_i,s_i)-\widehat\mu_0(t_i,s_0),
  \qquad
  \mathbb E[\widehat D_i\mid t_i,s_i]=\tau(t_i,s_i).
  \label{eq:observed-paired-gain}
\end{equation}
With one paired execution, $\widehat D_i\in\{-1,0,1\}$ records realized harm,
no change, or help. Pairing matches the task across conditions; it does not
require identical execution trajectories. Observed differences combine expected gains with
execution noise. Independent repetitions under a fixed execution
distribution reduce the noise in $\widehat D_i$.

\subsection{Selective use as a downstream application}

The policy $\widehat\pi_\eta$ maps each task to an action---use (1) or skip
(0)---by comparing its predicted gain $\widehat\tau(t,s)$ with threshold $\eta$:
\begin{equation}
 \widehat\pi_{\eta}(t,s)=\Ind\{\widehat\tau(t,s)>\eta\}.
 \label{eq:success-first-gate}
\end{equation}
At $\eta=0$, the policy selects positive predicted gains; a larger threshold
requires a larger predicted benefit.
We evaluate success rate and token usage against Always-off (skip the skill)
and Always-on (use it). \Cref{sec:theory} develops the gain estimator;
\cref{sec:experiments} specifies the evaluation protocol.

\section{Method and Theoretical Analysis}
\label{sec:theory}

\textsc{SkillDelta} averages observed gains from similar historical tasks to
predict whether a supplied skill will help a new task. The target provides
its description and skill identity; execution outcomes come only from
support tasks. We analyze when this transfer is reliable, describe the
predictor, and connect its errors to decision loss. Proofs and extensions
are in \cref{app:proofs}.

\subsection{What can transfer?}

Identical historical outcomes can be consistent with either benefit or harm
on a new task (\cref{app:no-free-lunch}). Predicting its gain therefore requires
a relation between support and target effects. For each eligible pair, we assume

\begin{equation}
  |\tau(t)-\tau(t')|\leq Ld(\phi(t),\phi(t'))+\varepsilon_\phi.
  \label{eq:effect-smoothness}
\end{equation}
Here $\phi$ represents task text and $d$ measures distance.
The residual $\varepsilon_\phi$ accounts for gain differences left unexplained
by the representation distance.

Consider a calculator skill and two questions about the same medical
quantity: one asks for a definition, the other for a unit conversion.
Their topics are similar, but the calculation procedure may help only the
second. Predicting gains requires a representation that captures such
differences in benefit, with $\varepsilon_\phi$ allowing for those it misses.

\subsection{What limits gain prediction?}

For support task $i$, write
$\widehat D_i=r_i^{-1}\sum_{\ell=1}^{r_i}D_{i\ell}$, where
$D_{i\ell}\in[-1,1]$ are independent across tasks and repetitions, with mean
$\tau(t_i)$. Let nonnegative weights sum to one
and let $\widehat\tau(t)=\sum_i w_i(t)\widehat D_i$. Define the weighted radius
$\rho_w(t)=\sum_iw_i(t)d(\phi(t),\phi(t_i))$.

\begin{theorem}[Local gain-prediction bound]
\label{thm:local-bound}
Under \cref{eq:effect-smoothness}, for any fixed target $t$, weights independent
of support outcomes, and $\delta\in(0,1)$, with probability at least $1-\delta$,
\begin{equation}
  |\widehat\tau(t)-\tau(t)|
  \leq
  \underbrace{L\rho_w(t)}_{\textnormal{coverage}}
  +\underbrace{\varepsilon_\phi}_{\textnormal{representation residual}}
  +\underbrace{
    \sqrt{2\log(2/\delta)\sum_i\frac{w_i(t)^2}{r_i}}
  }_{\textnormal{execution noise}}.
  \label{eq:main-error-bound}
\end{equation}
\end{theorem}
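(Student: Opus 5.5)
The plan is a bias--variance split around the conditional mean of the estimator. Condition on the support tasks and the weights, which are fixed with respect to the outcomes. Set $\bar\tau(t)=\sum_i w_i(t)\tau(t_i)$. By \cref{eq:observed-paired-gain}, $\bar\tau(t)=\mathbb E[\widehat\tau(t)]$. The triangle inequality then gives
\[
|\widehat\tau(t)-\tau(t)|\le|\bar\tau(t)-\tau(t)|+|\widehat\tau(t)-\bar\tau(t)|,
\]
and I will bound the two terms separately: deterministically for the first, with high probability for the second.

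For the deterministic term, the weights are nonnegative and sum to one, so $\tau(t)=\sum_i w_i(t)\tau(t)$. This gives
\[
|\bar\tau(t)-\tau(t)|\le\sum_i w_i(t)\,|\tau(t_i)-\tau(t)|.
\]
I apply \cref{eq:effect-smoothness} to each eligible pair $(t,t_i)$, bounding each summand by $L\,d(\phi(t),\phi(t_i))+\varepsilon_\phi$. Summing against the weights yields $L\rho_w(t)+\varepsilon_\phi$. The only care needed is that nonzero weight is placed only on eligible support tasks, which is how the predictor restricts the support.

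For the stochastic term, expand
\[
\widehat\tau(t)-\bar\tau(t)=\sum_i\sum_{\ell=1}^{r_i}\frac{w_i(t)}{r_i}\bigl(D_{i\ell}-\tau(t_i)\bigr).
\]
This is a sum of independent, mean-zero terms. Term $(i,\ell)$ lies in an interval of length $2w_i(t)/r_i$, because $D_{i\ell}\in[-1,1]$. I apply Hoeffding's inequality, which is valid because the weights are independent of the outcomes, so conditioning on them preserves independence and the means. The sum of squared ranges is
\[
\sum_i r_i\cdot\frac{4w_i(t)^2}{r_i^2}=4\sum_i\frac{w_i(t)^2}{r_i}.
\]
Hoeffding then gives
\[
\Pr\bigl(|\widehat\tau-\bar\tau|\ge u\bigr)\le 2\exp\Bigl(-\frac{2u^2}{4\sum_i w_i^2/r_i}\Bigr).
\]
Setting the right side equal to $\delta$ gives $u=\sqrt{2\log(2/\delta)\sum_i w_i(t)^2/r_i}$, which matches the execution-noise term exactly. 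Combining this with the deterministic bound proves \cref{eq:main-error-bound}.

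The main obstacle is the conditioning and independence bookkeeping rather than any calculation. The weights may depend on the embeddings and on the target, for example through $k$NN ranks. This is harmless only if they are measurable with respect to a $\sigma$-field that is independent of the $D_{i\ell}$, and the theorem's hypothesis ``weights independent of support outcomes'' must be read that way. I would state the Hoeffding step conditionally on that $\sigma$-field and then integrate out. Since the conditional bound holds uniformly, the unconditional probability is still at least $1-\delta$.
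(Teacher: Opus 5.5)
Your proposal is correct and follows the paper's proof essentially step for step. You use the same split of $\widehat\tau(t)-\tau(t)$ into a transfer-mismatch term, bounded by $L\rho_w(t)+\varepsilon_\phi$ via \cref{eq:effect-smoothness} with nonnegative weights summing to one, and an execution-noise term handled by Hoeffding over the $(i,\ell)$ summands with ranges $2w_i(t)/r_i$. Your remarks about conditioning on a $\sigma$-field independent of the $D_{i\ell}$, and about applying smoothness only to eligible pairs, make explicit what the paper states as conditioning on outcome-independent weights throughout.
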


The same $\varepsilon_\phi$ appears in both equations because averaging with
weights that sum to one preserves the residual bound.
The bound distinguishes three ways to change the evidence. A larger support
pool can supply closer neighbors. Using more neighbors can reduce noise but
increase transfer bias; repeating executions improves gain precision at fixed
coverage and representation. These distinctions guide the evidence-scaling
diagnostics. The bound's numerical strength at small neighborhoods and the
additional conditions for future-task guarantees are discussed in
\cref{app:local-bound-proof,app:fresh-target-generalization}.

\subsection{Local prediction from paired evidence}

\Cref{fig:skilldelta-pipeline} illustrates the default signed-gain predictor.

\paragraph{Measure paired gains.}
The left table separates help, harm, success in both conditions, and failure
in both. Both unchanged outcomes have zero gain; the observed differences
$\widehat D_i$ provide the support labels.

\paragraph{Transfer gains from nearby tasks.}
In the middle panel, the support rule uses skill or family metadata to select
eligible tasks, for example those sharing group $g(t)$. Question-only vectors
$\phi(t)$ determine similarity within that scope. These steps play different
roles: the support rule specifies which historical skill interventions can
inform the target, while the encoder locates similar questions among them.
Sharing a skill or family makes tasks eligible; it does not assign them equal
gains. Neither observed successes nor gains enter the encoder or determine
the neighbor weights.
We select up to $k$ nearest
tasks, shown around the star, and weight their gains by clipped cosine
similarity:
\begin{equation}
  \widehat\tau(t,s)=\sum_{i\in N_k(t,s)}w_i(t,s)\widehat D_i,
  \qquad
  w_i(t,s)=
  \frac{[\cos(\phi(t),\phi(t_i))]_+}
       {\sum_{j\in N_k(t,s)}[\cos(\phi(t),\phi(t_j))]_+}.
  \label{eq:skilldelta-score}
\end{equation}
If all clipped similarities are zero, we use uniform weights. With shared
weights, this estimator also equals
$\widehat\mu_{1,w}(t)-\widehat\mu_{0,w}(t)$, where
$\widehat\mu_{a,w}(t)=\sum_iw_i(t)\widehat\mu_a(t_i)$.

For illustration, neighbors with gains $+1$, $0$, and $-1$ and weights
$0.5$, $0.3$, and $0.2$ give $\widehat\tau=0.3$. Both helpful and harmful
evidence affect the estimate. Zero-gain neighbors retain their weight,
bringing the prediction toward zero rather than disappearing from the
calculation.

\paragraph{Predict gain and choose an action.}
The right panel applies \cref{eq:success-first-gate} to the signed estimate.
Let $\mathcal I_{-t}$ index historical support tasks excluding target $t$.
\Cref{alg:skilldelta} returns the estimate and action; with no eligible support,
it skips the skill and reports zero as a fallback score.
All main experiments use this signed-gain rule; shared support and threshold
settings appear in \cref{sec:experiments}.

\begin{algorithm}[H]
  \caption{Default signed-gain prediction and selective skill use}
  \label{alg:skilldelta}
  \small
  \begin{algorithmic}[1]
    \Require Paired outcomes $Y_{i,a,\ell}$ for support tasks
      $(t_i,s_i)_{i\in\mathcal I_{-t}}$, target $(t,s)$, and representation $\phi$
    \Require Scope $\mathcal I(t,s)$, neighborhood size $k$, and threshold $\eta$
    \State $\mathcal I\gets\mathcal I(t,s)\cap\mathcal I_{-t}$
    \State \textbf{if} $\mathcal I=\varnothing$ \textbf{then return} $(0,0)$
    \State $\widehat D_i\gets r_i^{-1}\sum_{\ell=1}^{r_i}
      (Y_{i,1,\ell}-Y_{i,0,\ell})$ for $i\in\mathcal I$
    \State $N_k\gets$ indices of the $\min(k,|\mathcal I|)$ largest
      $\cos(\phi(t),\phi(t_i))$, $i\in\mathcal I$
    \State $q_i\gets[\cos(\phi(t),\phi(t_i))]_+$ for $i\in N_k$
    \State $w_i\gets q_i/\sum_{j\in N_k}q_j$ if the denominator is positive;
      otherwise $w_i\gets1/|N_k|$
    \State $\widehat\tau(t,s)\gets\sum_{i\in N_k}w_i\widehat D_i$;
      $\widehat\pi(t,s)\gets\Ind\{\widehat\tau(t,s)>\eta\}$
    \State \Return $(\widehat\tau(t,s),\widehat\pi(t,s))$
  \end{algorithmic}
\end{algorithm}

The historical bank stores task descriptions, skill or family identities,
fixed question vectors, and paired outcomes. Each prediction can be traced to
its retrieved records and weights; new records can be added without refitting
model parameters. At deployment, the upstream system supplies a new task and
candidate skill. \textsc{SkillDelta} encodes the question, retrieves eligible
neighbors, and scores the skill before any target execution, without an
additional generative self-judgment call. The bank can serve subsequent tasks
under the same agent and skill conditions.
We release a DeepSeek Harness plugin that follows this workflow with uniform
neighbor averaging, injecting the supplied skill before the first model step
when predicted gain exceeds the threshold.

\subsection{Decision accuracy and loss}

Let $\widehat a_\eta(t)=\widehat\pi_\eta(t)$ be the action chosen by
\cref{eq:success-first-gate}, $a_\eta(t)=\Ind\{\tau(t)>\eta\}$ its oracle,
and $B(t)$ the right-hand side of \cref{eq:main-error-bound}.

\begin{corollary}[Threshold decision]
\label{cor:threshold}
On the event in \cref{thm:local-bound}, $\widehat a_\eta(t)=a_\eta(t)$ whenever
$|\tau(t)-\eta|>B(t)$.
\end{corollary}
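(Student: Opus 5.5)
The plan is a direct two-case argument. We work on the event of \cref{thm:local-bound}, where $|\widehat\tau(t)-\tau(t)|\le B(t)$. We then show that $\widehat\tau(t)$ and $\tau(t)$ fall on the same side of $\eta$ whenever $\tau(t)$ is farther than $B(t)$ from $\eta$. No new probabilistic input is needed: the corollary is deterministic once we condition on that event, and the failure probability $\delta$ is inherited unchanged.

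\emph{Case $\tau(t)>\eta+B(t)$.} Here $a_\eta(t)=1$. On the event we have $\widehat\tau(t)\ge\tau(t)-B(t)>\eta$, so $\widehat a_\eta(t)=\Ind\{\widehat\tau(t)>\eta\}=1$.

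\emph{Case $\tau(t)<\eta-B(t)$.} Here $a_\eta(t)=0$. On the event we have $\widehat\tau(t)\le\tau(t)+B(t)<\eta$, so $\widehat a_\eta(t)=0$.

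Together the two cases exhaust the condition $|\tau(t)-\eta|>B(t)$, which gives $\widehat a_\eta(t)=a_\eta(t)$.

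The only point needing care is the strict-versus-weak inequality at the threshold. The policy in \cref{eq:success-first-gate} uses a strict comparison $>\eta$. The strict margin $|\tau(t)-\eta|>B(t)$ guarantees $\widehat\tau(t)\neq\eta$, so the tie convention never matters. I would also note that $B(t)$ depends only on the fixed target, the weights, $\delta$, $L$, and $\varepsilon_\phi$, and not on the outcomes. The condition $|\tau(t)-\eta|>B(t)$ is therefore a deterministic statement about the fixed target, and no extra union bound is required.
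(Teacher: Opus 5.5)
Your proposal is correct and is essentially the paper's own proof: on the event $|\widehat\tau(t)-\tau(t)|\le B(t)$, each of the two cases $\tau(t)>\eta+B(t)$ and $\tau(t)<\eta-B(t)$ forces $\widehat\tau(t)$ strictly onto the same side of $\eta$ as $\tau(t)$. The paper also states the contrapositive, that disagreement on this event implies $|\tau(t)-\eta|\le B(t)$, which it later uses in the fresh-task theorem; that form follows immediately from your argument.
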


The margin $|\tau(t)-\eta|$ determines how much error a decision can tolerate.
To measure the cost when a decision is wrong, define threshold-relative regret as
\begin{equation}
  R_\eta(\widehat a,t)
  = |\tau(t)-\eta|\,\Ind\{\widehat a_\eta(t)\neq a_\eta(t)\}.
  \label{eq:decision-regret}
\end{equation}
At $\eta=0$, this is the success probability lost relative to choosing with
knowledge of the true gain.

\begin{proposition}[Routing regret is controlled by gain error]
\label{prop:routing-regret}
For every target task and threshold $\eta$,
\begin{equation}
  R_\eta(\widehat a,t) \leq |\widehat\tau(t)-\tau(t)|.
  \label{eq:routing-regret-bound}
\end{equation}
Consequently, under the event in \cref{thm:local-bound},
$R_\eta(\widehat a,t)\leq B(t)$.
\end{proposition}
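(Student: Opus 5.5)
The argument is a deterministic case analysis on whether the estimated action agrees with the oracle action, followed by \cref{thm:local-bound} for the high-probability consequence. Fix a target $t$ and a threshold $\eta$, and write $\widehat\tau=\widehat\tau(t)$ and $\tau=\tau(t)$. By \cref{eq:success-first-gate}, $\widehat a_\eta(t)=\Ind\{\widehat\tau>\eta\}$ and $a_\eta(t)=\Ind\{\tau>\eta\}$.

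\textbf{Agreeing actions.} If $\widehat a_\eta(t)=a_\eta(t)$, then by \cref{eq:decision-regret} the indicator vanishes. Hence $R_\eta(\widehat a,t)=0\leq|\widehat\tau-\tau|$, and there is nothing more to show.

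\textbf{Disagreeing actions.} If $\widehat a_\eta(t)\neq a_\eta(t)$, then $R_\eta(\widehat a,t)=|\tau-\eta|$, and there are two sub-cases.
\begin{enumerate}[leftmargin=*,itemsep=0pt,parsep=0pt,topsep=2pt]
  \item $\widehat\tau>\eta\geq\tau$. Here $|\tau-\eta|=\eta-\tau\leq\widehat\tau-\tau=|\widehat\tau-\tau|$.
  \item $\widehat\tau\leq\eta<\tau$. Here $|\tau-\eta|=\tau-\eta\leq\tau-\widehat\tau=|\widehat\tau-\tau|$.
\end{enumerate}
In words, when the actions disagree, $\eta$ lies weakly between $\widehat\tau$ and $\tau$. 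The distance from $\tau$ to $\eta$ is therefore at most the distance from $\tau$ to $\widehat\tau$. This gives \cref{eq:routing-regret-bound} for every $t$ and $\eta$ without any probabilistic assumption.

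\textbf{High-probability consequence.} On the event of \cref{thm:local-bound}, $|\widehat\tau(t)-\tau(t)|\leq B(t)$, and chaining this with \cref{eq:routing-regret-bound} yields $R_\eta(\widehat a,t)\leq B(t)$. This matches \cref{cor:threshold}: wrong decisions occur only when $|\tau-\eta|\leq B(t)$, and the regret of such a decision is at most that margin.

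The proof has no real obstacle. The only point needing care is the tie convention at $\widehat\tau=\eta$ or $\tau=\eta$. Because both actions use the same strict inequality, a disagreement still forces $\eta$ into the closed interval between $\widehat\tau$ and $\tau$. If $\tau=\eta$, the regret is zero anyway.
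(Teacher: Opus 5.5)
Your proposal is correct and follows the paper's argument: disagreement forces $\eta$ to lie between $\widehat\tau(t)$ and $\tau(t)$, so $|\tau(t)-\eta|\leq|\widehat\tau(t)-\tau(t)|$, agreement gives zero regret, and chaining with the event of \cref{thm:local-bound} yields $R_\eta(\widehat a,t)\leq B(t)$. You also spell out the two sign sub-cases and the tie convention explicitly, which the paper leaves implicit.
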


For example, at a zero threshold, skipping a skill that raises success from
50\% to 51\% loses less than skipping one that raises it from 50\% to 70\%.
AUROC evaluates ranking by observed gain sign; regret also depends on gain
magnitude. This motivates evaluating prediction quality alongside policy
success and token usage. Proofs and extensions appear in
\cref{app:decision-layer,app:fresh-target-generalization}.

Matched-use-rate comparison isolates the value of this task-level choice.
Relative to always skipping, a policy's expected success gain on a fixed
collection is the sum of the gains on selected tasks divided by the total
number of tasks. A random policy with the same expected use rate assigns
skill uses independently of those gains. A success advantage over that
reference therefore measures the value of selecting the recipients, beyond
simply using the skill more often.

\section{Experiment and Results}
\label{sec:experiments}

\subsection{Benchmarks and target-agent stacks}

\begin{wraptable}{L}{0.40\textwidth}
  \centering
  \caption{Task and skill inventory.}
  \label{tab:benchmark-design}
  \small
  \setlength{\tabcolsep}{6pt}
  \begin{tabular}{@{}lrr@{}}
    \toprule
    Benchmark & Tasks & Skills \\
    \midrule
    ToolQA & 1,430 & 14 \\
    MedCalc-Bench & 1,100 & 55 \\
    BigCodeBench & 1,136 & 139 \\
    LogicBench & 760 & 19 \\
    SpreadsheetBench & 399 & 1$^{\ddagger}$ \\
    \bottomrule
  \end{tabular}
  \par\vspace{0.25\baselineskip}
  \raggedright\scriptsize $^{\ddagger}$External skill source, not from the
  SRA-Bench skill corpus.
\end{wraptable}

We evaluate four SRA-Bench domains \citep{su2026sra} and SpreadsheetBench
(\cref{tab:benchmark-design}) under three fixed target-agent stacks:
\texttt{Qwen-Turbo}, \texttt{GLM-5.3-Flash}, and \texttt{DeepSeek-V4-Flash}.
Each task is executed with and without its supplied skill, using temperature
zero, frozen artifacts, and benchmark-provided evaluators.

The skill-enabled condition uses the supplied skill for ToolQA and LogicBench,
a calculator skill for MedCalc-Bench, all gold skills for BigCodeBench, and an
externally optimized spreadsheet skill with a pinned evaluation harness.
The main Qwen-Turbo and GLM-5.3-Flash results use one paired execution per
task, except LogicBench, which uses three-repeat means. DeepSeek-V4-Flash
uses three-repeat means on all five benchmarks. RQ3 uses repeated-execution
panels to study support-gain precision.
Execution profiles and token ledgers are frozen separately for each stack;
outcomes are never pooled across stacks. Protocol details, artifact audits,
and task--skill structure are reported in
\cref{app:empirical-details,app:task-skill-structure}.

\subsection{Within-inventory leave-one-task-out evaluation}

We retrospectively evaluate the completed inventories using leave-one-task-out
prediction: both outcomes of each target are excluded from its support.
All three agents share question-only \texttt{text-embedding-3-small} vectors
(1,536 dimensions), L2-normalized before cosine similarity. On every benchmark,
the predictor averages signed gains from up to $k=6$ same-family neighbors,
using nonnegative cosine weights, and enables the skill when the score is
strictly positive. If fewer than six neighbors exist, it uses all available
neighbors; empty support yields score zero and Skip skill. This common rule
removes benchmark-specific thresholds and aggregation choices
(\cref{app:score-protocol,app:configuration-origin}).
Here, ``family'' denotes the recorded task group: supplied-skill families,
BigCodeBench focal groups, or SpreadsheetBench task types
(\cref{app:task-skill-structure}).

Prospective panels test new targets under their historical frozen settings
(\cref{app:generalization-stress}).
After fixing each action,
we take the recorded success and execution tokens from the skill-enabled
condition if the action is use, and from the no-skill condition otherwise.
This reconstructs policy outcomes without additional task executions.

Direct self-judgment gives the target agent the question and the same skill
text as the skill-enabled condition, without historical outcomes or support
statistics. One shared prompt requests a binary use/no-use action;
the fixed parser accepts a unique decision label and maps ambiguous or empty
responses to no-use.

An outcome-free relevance gate uses task--skill TF--IDF cosine similarity,
with a median threshold fixed before outcomes are read. A with-skill-only
predictor averages skill-enabled outcomes over the same neighbors and weights,
isolating the value of paired gains. TF--IDF kNN and
family-mean controls receive the same paired support as \textsc{SkillDelta}
(\cref{app:empirical-details}).

\paragraph{Evidence-scaling diagnostics.}
RQ3 varies support-pool size, neighborhood size, or retained execution
repetitions and measures gain MAE against fixed empirical targets. The
repetition sweep holds neighborhoods fixed; full settings are in
\cref{app:rq3-settings}.

\subsection{Empirical Results}
\label{sec:results}

\subsubsection{RQ1: Skill-gain structure and prediction}

Both helpful and harmful outcomes occur under the same supplied skill
(\cref{app:task-skill-structure}). This variation, hidden by skill-level
averages, motivates predicting task-level gains.

RQ1 measures AUROC for positive ($\widehat D>0$) versus nonpositive observed
gains on held-out tasks, probing whether representation and support preserve
skill-effect structure (\cref{thm:local-bound}).

\begin{figure*}[ht]
  \centering
  \includegraphics[width=\textwidth]{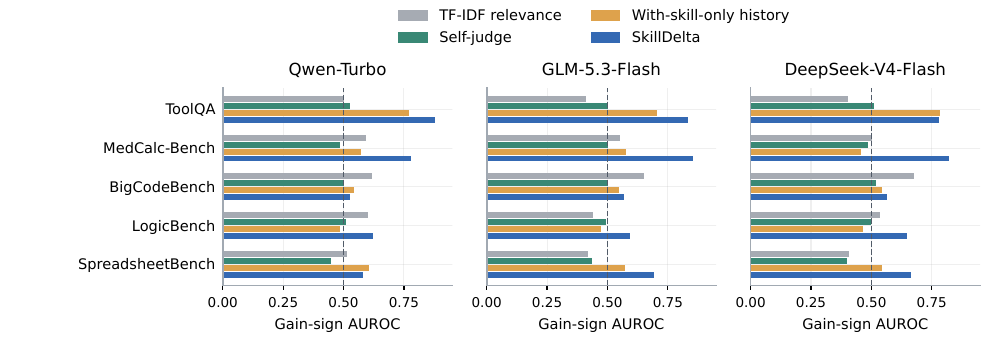}
  \caption{Paired history improves observed-gain ranking in most settings.}
  \label{fig:main-results}
  \begin{minipage}{0.96\linewidth}\scriptsize
  Bars report AUROC for positive versus nonpositive observed gains, comparing
  paired history with relevance, self-judge, and with-skill-only scores.
  Dashed lines denote chance performance. Paired and with-skill-only scores use
  identical target-excluded, same-family $k=6$ neighborhoods and weights.
  \end{minipage}
\end{figure*}

Across the 15 stack--benchmark panels in \cref{fig:main-results}, paired evidence
improves AUROC over the no-outcome reference on 12 panels and over the
with-skill-only reference on 12; it exceeds both on 10. The gains are most
consistent on ToolQA and MedCalc-Bench. BigCodeBench provides the clearest
contrast: on all three stacks, its outcome-free lexical score ranks gains better
than the paired predictor. The with-skill-only predictor exceeds the paired
predictor on Qwen-Turbo BigCodeBench and SpreadsheetBench, and on
DeepSeek-V4-Flash ToolQA.

\paragraph{Can the agent judge for itself?}
Direct self-judgment enables the skill on over 91\% of tasks in 14 of 15
panels; DeepSeek-V4-Flash SpreadsheetBench is the exception at 44.4\%.
Its policy is therefore usually close to Always-on and offers little
gain-sign discrimination. \textsc{SkillDelta}'s continuous scores exceed the
binary self-judge scores in AUROC on all 15 panels. Its decision accuracy and
the judge's routing overhead appear in \cref{tab:no-history-controls}.

\subsubsection{RQ2: Decision value of gain prediction}

At the zero-threshold operating points in \cref{tab:rq2-policy-summary},
\textsc{SkillDelta} improves observed success over Always-off and saves tokens
relative to Always-on on all 15 panels; 14 task-bootstrap intervals for the
success gain exclude zero (\cref{tab:main-intervals}).

At the same expected skill-use rate, \textsc{SkillDelta} achieves higher
success than random activation in all 15 panels, with an average absolute
gain of 4.3\% (\cref{fig:matched-rate-three-stack}).
Matching each recorded task group's use rate separates this advantage into
0.61\% from within-group selection and
3.72\% from between-group allocation (\cref{fig:selection-components}).
The larger allocation term reflects directing the same expected number of
skill uses toward groups with higher average observed gains. This answers
a practical question---which kinds of tasks merit skill use---even when
ranking tasks within each group is difficult.

Finer task selection is clearest on ToolQA: its three-agent mean is 4.41\%,
and all three conditional intervals exclude zero.
Elsewhere, within-group intervals include zero, except for a negative
interval on Qwen-Turbo BigCodeBench.

\begin{table*}[t]
\centering
\caption{Task success and token change under the unified protocol.}
\label{tab:rq2-policy-summary}
\scriptsize
\setlength{\tabcolsep}{4pt}
\begin{tabular}{@{}llrrrrrr@{}}
\toprule
& & \multicolumn{4}{c}{Task success (\%)} & \multicolumn{2}{c}{Token savings (\%)} \\
\cmidrule(lr){3-6}\cmidrule(lr){7-8}
Model & Benchmark & Off & On & Self-judge & \textsc{SkillDelta} & Self-judge & \textsc{SkillDelta} \\
\midrule
 & ToolQA & 28.6 & 44.7 & 44.5 & 44.5 & -5.3 & 10.4 \\
 & MedCalc-Bench & 43.2 & 71.7 & 70.5 & 70.6 & -65.8 & 14.8 \\
Qwen-Turbo & BigCodeBench & 43.7 & 51.4 & 51.4 & 47.4 & -98.2 & 49.4 \\
 & LogicBench & 75.5 & 87.2 & 87.4 & 85.2 & -78.7 & 29.1 \\
 & SpreadsheetBench & 15.5 & 19.8 & 18.3 & 17.8 & -72.3 & 36.2 \\
\midrule
 & ToolQA & 38.7 & 45.0 & 45.1 & 47.8 & -2.6 & 23.8 \\
 & MedCalc-Bench & 78.4 & 83.3 & 83.3 & 88.3 & -27.8 & 41.1 \\
GLM-5.3-Flash & BigCodeBench & 52.0 & 63.3 & 63.2 & 59.8 & -65.4 & 25.4 \\
 & LogicBench & 69.2 & 85.1 & 84.3 & 83.9 & -91.0 & 8.2 \\
 & SpreadsheetBench & 46.1 & 73.4 & 68.4 & 72.4 & -26.0 & 4.8 \\
\midrule
 & ToolQA & 25.5 & 38.3 & 37.9 & 37.9 & -5.8 & 3.0 \\
 & MedCalc-Bench & 74.3 & 89.4 & 88.8 & 90.1 & -66.1 & 12.4 \\
DeepSeek-V4-Flash & BigCodeBench & 49.3 & 61.5 & 61.5 & 57.7 & -85.3 & 26.5 \\
 & LogicBench & 74.4 & 85.1 & 84.0 & 83.6 & -85.6 & 19.5 \\
 & SpreadsheetBench & 46.5 & 61.4 & 49.4 & 59.9 & -28.5 & 8.0 \\
\bottomrule
\end{tabular}
\par\vspace{1mm}
\begin{minipage}{0.98\linewidth}\scriptsize
Token savings are relative to Always-on. \textsc{SkillDelta} uses target-excluded, same-family cosine-weighted $k=6$ signed-gain scores and threshold zero. Its execution costs exclude history collection and encoding; self-judge includes its routing call.
\end{minipage}
\end{table*}

These decisions trade some success for lower execution cost. Across the 15 equally
weighted panels, mean success is 63.12\% versus 64.04\% for Always-on,
with 20.8\% mean token savings (range: $3.0$--$49.4\%$). Success is below Always-on in
12 of the 15 panels. Direct self-judgment increases total tokens by
$2.6$--$98.2\%$ relative to Always-on, including its routing call.

\begin{figure*}[t]
  \centering
  \includegraphics[width=0.99\textwidth]{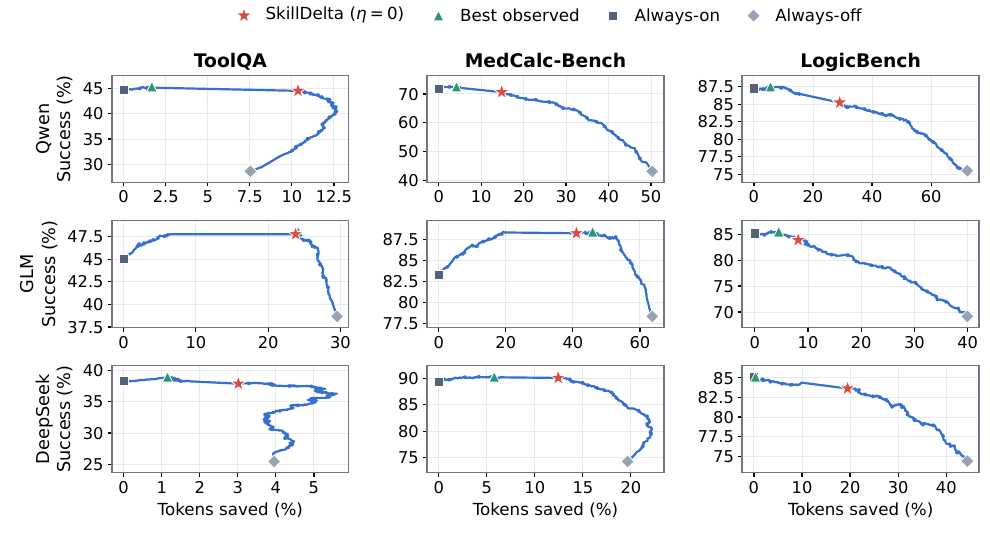}
  \caption{Success--token trade-offs as the decision threshold changes.}
  \label{fig:gain-threshold-tradeoff}
  \par\vspace{1mm}
  \begin{minipage}{0.94\linewidth}\scriptsize
  Fixed within-family, cosine-weighted $k=6$ scores with each target excluded
  from support. Stars mark the default ($\eta=0$); triangles mark the highest
  success found retrospectively in each sweep.
  \end{minipage}
\end{figure*}

\Cref{fig:gain-threshold-tradeoff} varies $\eta$ for ToolQA, MedCalc-Bench, and
LogicBench. Higher thresholds reduce skill use, with success--cost trade-offs
that depend on the benchmark and agent. The default stars reproduce
\cref{tab:rq2-policy-summary}.

Ridge controls use fixed grouped splits for all three agents
(\cref{tab:single-dual-ridge}). Dual Ridge adds an incremental-token head to gain regression;
matched-rule single Ridge uses the same validation search without a cost term.
Cost prediction saves more tokens on several panels, sometimes at lower test success.

\subsubsection{RQ3: What makes transfer work?}

\Cref{fig:rq3-theory-scaling} varies support-pool size $n$, neighborhood size
$k$, and execution repetitions $r$ across three benchmarks and three models.
The rows probe coverage, aggregation, and execution noise in
\cref{eq:main-error-bound}, using held-out gain MAE.

\begin{figure*}[t]
  \centering
  \includegraphics[width=\textwidth]{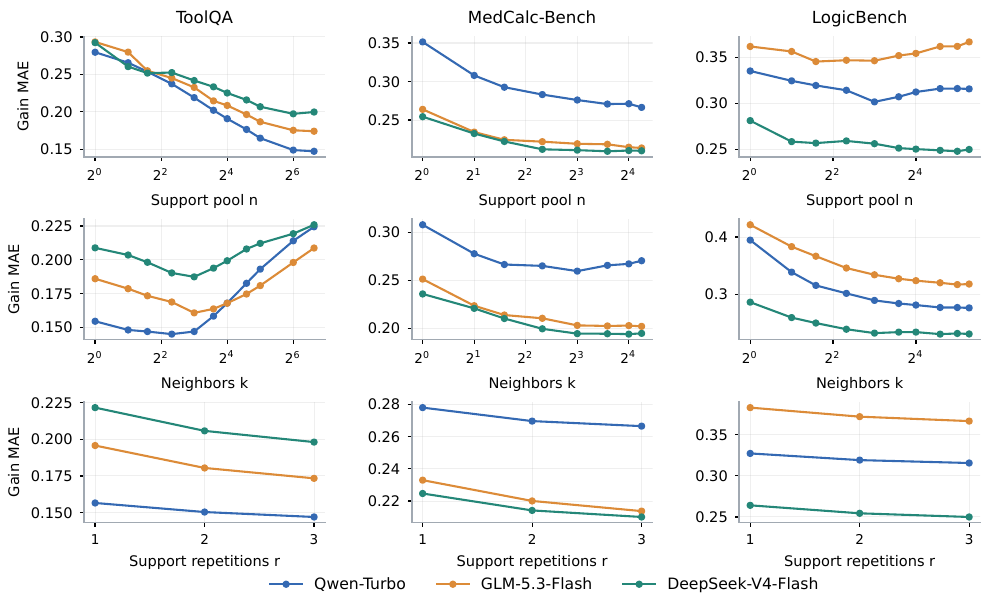}
  \caption{Support, neighborhood size, and repetition affect prediction differently.}
  \label{fig:rq3-theory-scaling}
  \par\vspace{1mm}
  \begin{minipage}{0.96\linewidth}\scriptsize
  Gain MAE (lower is better) against fixed targets while varying support-pool
  size (top), neighbors (middle), or execution repetitions (bottom).
  Support helps unevenly; larger neighborhoods trade locality for pooling;
  repetitions consistently reduce error.
  \end{minipage}
\end{figure*}

\paragraph{Support size: benefits depend on the benchmark.}
Larger within-family pools generally help ToolQA and MedCalc-Bench;
LogicBench varies across agents and need not improve monotonically. Additional
candidates can provide closer neighbors and reduce the coverage term
$L\rho_w$. Their value depends on whether they improve the selected
neighborhood and preserve relevant skill-effect structure.

\paragraph{Neighborhood size: pooling trades noise against locality.}
ToolQA favors intermediate neighborhoods; MedCalc-Bench and LogicBench show
agent-dependent minima or gains from broader pooling. This matches the
aggregation trade-off: adding neighbors
reduces execution noise through larger effective support, but can increase
$\rho_w$ by including tasks with different effects.

\paragraph{Repeated executions: smaller errors with diminishing returns.}
With neighborhoods fixed, increasing repetitions reduces MAE in all nine
panels, with smaller improvements at each step. This is consistent with the
$r^{-1/2}$ noise term: repetition improves support-gain precision while
leaving coverage and representation unchanged.

\section{Related Work}
\label{sec:related}

\paragraph{Skills, retrieval, and adaptive augmentation.}
Skill research also studies testing and coverage
\citep{wang2026skilltester,tan2026skillcoverage}. Tool retrieval
\citep{shi2025toolret} and adaptive augmentation address which external
resources to retrieve and when to use them
\citep{schick2023toolformer,asai2024selfrag,jeong2024adaptiverag}.

\paragraph{Instance-wise performance and heterogeneous effects.}
Algorithm selection and meta-learning predict task-specific performance
\citep{rice1976algorithm,xu2008satzilla,misir2017alors,achille2019task2vec};
performance-prediction methods estimate missing model--dataset entries
\citep{zhao2025crosspred,maiapolo2024tinybenchmarks,wu2026faq}.
Heterogeneous-effect estimation and policy learning connect incremental
outcomes to decisions \citep{shalit2017estimating,
nie2021rlearner,athey2019grf,diemert2021criteo,zhou2023multiaction}.
Two-response estimators predict the mean outcome under each condition and take the difference
\citep{kunzel2019metalearners}; shared local weights recover our signed-gain
estimator. Agent tasks can be repeated under both conditions, allowing us to
study how precisely these differences can be measured and transferred to new tasks.

\paragraph{Transfer and selective decisions.}
Domain adaptation studies coverage and conditional shift
\citep{peyre2019computational,courty2017optimal,redko2017otda,koc2025entanglement},
while selective and conformal prediction formalize uncertainty
\citep{elyaniv2010selective,lei2018distribution}.
We connect the coverage and precision of paired history to skill-use decisions.

\section{Discussion and Limitations}
\label{sec:discussion}

\paragraph{What limits prediction?}
Representation matters: TF--IDF yields higher gain-sign AUROC on three of
the five Qwen-Turbo benchmarks, with BigCodeBench rising from 0.527 to 0.582
under the same paired history (\cref{tab:predictor-baselines}). Semantic
similarity alone need not preserve similarity in skill benefit.
BigCodeBench's sparse groups and heterogeneous skill bundles may also
constrain local evidence (\cref{app:task-skill-structure}). Its natural-shift
panel further shows a success gain over Always-off while
task-level ranking weakens (\cref{app:generalization-stress}).

\paragraph{Deployment cost.}
A practical use case is repeated task execution under stable agent and skill
conditions, where paired evaluation records can be reused for incoming tasks.
Reported token savings cover execution and exclude bank construction, target
encoding, and retrieval. Collecting a bank from scratch adds an investment
whose value depends on reuse; agent or skill changes require rechecking its
evidence. This makes history availability and reuse central to deployment.

\section{Conclusion}

\textsc{SkillDelta} predicts whether a supplied skill improves a fixed agent's
chance of success before execution. Our analysis explains when paired history
supports this transfer. Across five benchmarks and three agents, the resulting selection
improves success over random activation at the same expected skill-use rate.
The gains arise mainly from allocation across task groups, with the clearest
additional within-group value on ToolQA. Paired history can therefore guide
which kinds of tasks receive a skill, even where finer task selection is weak.

\section*{AI Use Statement}
The authors led the research, with AI tools providing assistance in writing,
coding, and analysis. The authors revised and checked the content and take
full responsibility for the final paper.

\section*{Ethics Statement}
This work uses existing agent benchmarks and collects no new human-subject
data. Experiments invoke hosted language-model services; model identities,
token usage, failures, and costs are recorded for audit, while credentials and
private service metadata are excluded from released artifacts. Applications
involving safety-critical instructions require a separate safety assessment
beyond the task-success and token metrics studied here. Deployment trade-offs
are discussed in \cref{sec:discussion}.

\section*{Reproducibility Statement}
The \href{https://github.com/TankTechnology/skilldelta}{public code repository}
provides the core predictor, evaluation utilities, data-collection code, and
DeepSeek Harness plugin. The separate review supplement contains retained
task-level outcomes, embeddings, and scripts for recomputing main metrics,
RQ3 curves, threshold sweeps, selection components, and Ridge controls.
The public release contains code only; experimental protocols and proofs
appear in the appendix.

\bibliographystyle{iclr2027_conference}
\bibliography{references}

\clearpage
\appendix
\section{Proofs and Generalization Guarantees}
\label{app:proofs}

\subsection{Reading guide and notation}
\label{app:theory-guide}

The argument proceeds from the evidence needed to identify a gain, through
prediction error for one task, to the consequences for skill-use decisions.
We then average these results over new tasks. Policy evaluation is a separate
step: it asks what recorded executions tell us about a chosen policy's value.
\Cref{fig:theory-dependencies} summarizes these dependencies.

We use the fixed agent, supplied-skill condition, and notation of
\cref{sec:problem-definition,sec:theory}. Write $\tau(t)$ for $\tau(t,s)$ and
abbreviate the noise factor and error radius in \cref{eq:main-error-bound} as
\begin{equation}
  v_w(t)=\sum_i\frac{w_i(t)^2}{r_i},\qquad
  B_\delta(t)=L\rho_w(t)+\varepsilon_\phi+
  \sqrt{2\log(2/\delta)v_w(t)}.
  \label{eq:appendix-master-radius}
\end{equation}

\begin{figure}[H]
  \centering
  \begin{tikzpicture}[
    box/.style={draw=sdblue!65, fill=sdblue!4, rounded corners=2pt,
      align=left, text width=3.65cm, inner sep=6pt,
      minimum height=2.2cm, font=\footnotesize},
    route/.style={-{Latex[length=2mm]}, draw=sdblue!80, line width=0.7pt}
  ]
    \node[box] (local) at (0,0) {
      \textbf{\ref{app:local-bound-proof}: Local prediction}\\[3pt]
      Smooth skill effects\\
      Outcome-independent $w_i$\\
      Independent, bounded $D_{i\ell}$\\[3pt]
      $|\widehat\tau-\tau|\leq B_\delta$};
    \node[box, right=0.48cm of local] (decision) {
      \textbf{\ref{app:decision-layer}: Decisions}\\[3pt]
      Shared threshold for regret\\
      Valid error band for\\
      correctness and abstention\\[3pt]
      Correctness and regret};
    \node[box, right=0.48cm of decision] (fresh) {
      \textbf{\ref{app:fresh-target-generalization}: New tasks}\\[3pt]
      Independent target draw\\
      Source coverage for the\\
      explicit neighbor rate\\[3pt]
      Average error and loss};
    \draw[route] (local) -- (decision);
    \draw[route] (decision) -- (fresh);
    \node[anchor=south west, align=left, text width=13.17cm,
      font=\footnotesize, inner sep=0pt]
      at ([yshift=9pt]local.north west) {
      \textbf{Starting point (\ref{app:no-free-lunch}):}
      unseen-task gain needs a relation between support and target effects.};
    \node[draw=sdgray!70, fill=sdgray!5, rounded corners=2pt,
      anchor=north west, align=left, text width=12.71cm,
      inner sep=6pt, font=\footnotesize]
      at ([yshift=-10pt]local.south west) {
      \textbf{Separate evaluation argument (\ref{app:finite-panel-value}):}
      outcome-independent actions give unbiased finite-panel value;
      token costs must be integrable. No transfer assumption is needed.};
  \end{tikzpicture}
  \caption{How the theory fits together. The main chain turns historical
  evidence into prediction and decision guarantees. Policy evaluation uses
  a separate action--outcome independence argument.}
  \label{fig:theory-dependencies}
\end{figure}

\paragraph{What is random?}
For a fixed-target bound, task identities, representations, repetition counts,
and weights are fixed; probability is over support execution outcomes.
The fresh-task result additionally draws a target $T\sim\TargetDist$.
The neighbor-rate result also draws the support tasks. These are three
different sources of uncertainty, introduced only where needed below.

\subsection{Identification and observed gains}
\label{app:no-free-lunch}

This first result explains why a transfer assumption is needed: collecting
more support outcomes cannot determine an unrelated target's gain.

\begin{proposition}[No free lunch for unseen-task gain]
\label{prop:no-free-lunch}
Let $S=\{t_1,\ldots,t_n\}\subsetneq\mathcal T$ be the executed support tasks
and $t_\star\notin S$. Without a restriction relating $\tau(t_\star)$ to
$\{\tau(t_i)\}_{i=1}^n$, there exist two data-generating processes that agree
on the joint distribution of all observed support executions but satisfy
$\tau^{+}(t_\star)>0$ and $\tau^{-}(t_\star)\leq0$. Consequently, no predictor
can uniformly identify the target gain sign.
\end{proposition}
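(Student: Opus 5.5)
The plan is a two-construction argument. We fix any data-generating process for the support executions, then modify it only at the unobserved target $t_\star$. Concretely, we take a common law for the support outcomes: for each $t_i\in S$ and each repetition $\ell$, we choose a joint distribution of $(Y_{i,1,\ell},Y_{i,0,\ell})$ with $\mu_1(t_i)$, $\mu_0(t_i)$ fixed. An example is independent Bernoulli draws with arbitrary fixed means, independent across tasks and repetitions. Both processes $\mathcal P^{+}$ and $\mathcal P^{-}$ will use exactly this law on $S$. Since the observed data consist only of support executions (target outcomes are never observed before the decision, as in \cref{alg:skilldelta}), the joint distribution of everything observed coincides under the two processes by construction.

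Next we specify the target. Under $\mathcal P^{+}$ we set $\mu_1^{+}(t_\star)=1$ and $\mu_0^{+}(t_\star)=0$, giving $\tau^{+}(t_\star)=1>0$. Under $\mathcal P^{-}$ we set $\mu_1^{-}(t_\star)=0$ and $\mu_0^{-}(t_\star)=1$, giving $\tau^{-}(t_\star)=-1\le 0$; any choice with $\mu_1^{-}\le\mu_0^{-}$ also works. This is a legitimate assignment of the conditional success probabilities in \cref{eq:conditional-effect} because nothing ties $t_\star$ to $S$. The hypothesis ``without a restriction relating $\tau(t_\star)$ to $\{\tau(t_i)\}$'' is exactly what allows this free choice. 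In particular, we are not imposing \cref{eq:effect-smoothness}. Under that condition the construction could fail whenever $L\,d(\phi(t_\star),\phi(t_i))+\varepsilon_\phi<1$ for some $i$, and we note this to show where the assumption enters.

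For the consequence, let $\widehat a$ be any (possibly randomized) predictor of the gain sign that is a measurable function of the observed support data and independent auxiliary randomness. Its output then has the same distribution under $\mathcal P^{+}$ and $\mathcal P^{-}$. Write $p=\Pr(\widehat a=1)$, which is common to both processes. The probability of a correct sign is $p$ under $\mathcal P^{+}$ and $1-p$ under $\mathcal P^{-}$. Hence
\[
\min\{p,\,1-p\}\le \tfrac12 ,
\]
and no predictor is correct with probability exceeding $1/2$ uniformly over the two processes. This is exactly the claim that the sign cannot be uniformly identified.

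The main obstacle is not technical but definitional: we must make precise what ``observed support executions'' includes. If the target's text or embedding $\phi(t_\star)$ is treated as data, we must check that it is identical under both processes. It is, since we alter only outcome probabilities and never task descriptions. We state this explicitly so that the indistinguishability covers all inputs available to a predictor.
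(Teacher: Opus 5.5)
Your proposal is correct and follows essentially the same route as the paper: fix the support-outcome law, then extend it at $t_\star$ in two ways with opposite gain signs, so that any support-based predictor has the same output distribution in both worlds. The paper uses $\mu_0(t_\star)=1/2$ with $\mu_1^{+}(t_\star)=3/4$ and $\mu_1^{-}(t_\star)=1/4$, whereas you use the extreme values $1$ and $0$; your explicit bound $\min\{p,1-p\}\le 1/2$ and your remark that $\phi(t_\star)$ is unchanged usefully sharpen the paper's informal ``cannot be correct uniformly.''
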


\begin{proof}
Fix the joint support-outcome distribution. Two extensions can both have
$\mu_0(t_\star)=1/2$, with $\mu_1^+(t_\star)=3/4$ and
$\mu_1^-(t_\star)=1/4$. They give opposite target gains but identical
distributions for every observed support variable. A support-based predictor
therefore has the same distribution in both worlds and cannot be correct
uniformly.
\end{proof}

\paragraph{Observed signs and expected gains.}
If the two conditions independently succeed with probability $1/2$, then
$\tau(t)=0$, yet one paired execution gives
$\Pr(\widehat D=1)=\Pr(\widehat D=-1)=1/4$.
Thus the observed label $\Ind\{\widehat D>0\}$ used for RQ1 AUROC differs
from the expected gain estimated by $\widehat\tau$.

\subsection{Local gain-prediction bound}
\label{app:local-bound-proof}

We prove \cref{thm:local-bound} by separating transfer mismatch from execution
noise. Condition on a nonempty eligible support set, representations,
repetition counts, and outcome-independent weights throughout this proof.
The bound includes the cosine-weighted predictor. With empty support, the
algorithm's zero score is a fallback, not an estimated zero gain.

\begin{proof}[Proof of \cref{thm:local-bound}]
\textit{Step 1: separate the two errors.}
Add and subtract the weighted expected gains:
\begin{align}
  \widehat\tau(t)-\tau(t)
  &=\underbrace{\sum_iw_i(t)\bigl(\widehat D_i-\tau(t_i)\bigr)}_{Z_w(t):\ \text{execution noise}}
  +\underbrace{\sum_iw_i(t)\bigl(\tau(t_i)-\tau(t)\bigr)}_{b_w(t):\ \text{transfer mismatch}}.
  \label{eq:proof-decomp}
\end{align}

\textit{Step 2: bound transfer mismatch.}
By \cref{eq:effect-smoothness}, nonnegative weights, and $\sum_iw_i=1$,
\[
 |b_w(t)|\leq\sum_iw_i(t)|\tau(t_i)-\tau(t)|
 \leq L\rho_w(t)+\varepsilon_\phi.
\]

\textit{Step 3: bound execution noise.}
Write $Z_w(t)$ as a sum of independent centered variables
\(
  \sum_i\sum_{\ell=1}^{r_i}
  (w_i/r_i)(D_{i\ell}-\mathbb E D_{i\ell}).
\)
Each uncentered summand lies in an interval of length $2w_i/r_i$. Hoeffding's
inequality \citep{hoeffding1963probability} therefore gives
\begin{equation}
  \Pr\left(
    |Z_w(t)|\geq x
  \right)
  \leq
  2\exp\!\left(-\frac{x^2}
  {2v_w(t)}\right).
\end{equation}
Setting the right-hand side to $\delta$ gives
$|Z_w(t)|\leq\sqrt{2\log(2/\delta)v_w(t)}$ with probability at least
$1-\delta$. Adding the two bounds proves the result.
\end{proof}

\paragraph{Neighborhood size and family averaging.}
With $k$ equally weighted neighbors and $r$ repetitions, $v_w=1/(kr)$.
Increasing $k$ reduces noise but can enlarge $\rho_w$; family averaging is
useful when effects vary little within the family. At equal repetition counts,
uniform weights minimize the noise term on a fixed support set, while the
total bound also depends on transfer bias.

The number of neighbors can overstate the amount of averaging when a few
weights dominate. Effective support makes this distinction explicit:
\begin{equation}
  n_{\mathrm{eff}}(t)
  =\left(\sum_i w_i(t)^2\right)^{-1},\qquad
  v_w(t)=\frac{1}{r\,n_{\mathrm{eff}}(t)}
  \quad\text{when all }r_i=r.
  \label{eq:effective-support}
\end{equation}
Substitution into \cref{eq:appendix-master-radius} shows how diffuse weights
and repetitions reduce the noise term.

\paragraph{Numerical strength at small neighborhoods.}
For at most $k$ nonzero weights and equal repetition count $r$,
Cauchy--Schwarz gives $v_w\geq1/(kr)$.
At $\delta=0.05$ and the main setting $k=6$, the minimum noise terms are
approximately $1.109$ for $r=1$ and $0.640$ for $r=3$.
Since $|\tau|\leq1$, the first case cannot certify a
zero-threshold decision. Here the bound explains the trade-off; numerical
certification needs more evidence or tighter uncertainty estimates.

\subsection{From gain error to decisions}
\label{app:decision-layer}

The following results translate prediction error into three decision
questions: when the binary action is correct, how much a wrong action costs,
and when an uncertainty band supports a definite action. Regret control is
algebraic; correctness statements use the event that the error band is valid.

\subsubsection{Threshold correctness}
\label{app:threshold-proof}

\begin{proof}[Proof of \cref{cor:threshold}]
On the event
$|\widehat\tau(t)-\tau(t)|\leq B(t)$, suppose
$|\tau(t)-\eta|>B(t)$. If $\tau(t)>\eta$, then
$\widehat\tau(t)\geq\tau(t)-B(t)>\eta$; if $\tau(t)<\eta$, then
$\widehat\tau(t)\leq\tau(t)+B(t)<\eta$. Thus the decisions agree. Taking the
contrapositive, disagreement on the same event implies
$|\tau(t)-\eta|\leq B(t)$.
\end{proof}

\subsubsection{Routing regret}
\label{app:routing-regret-proof}

\begin{proof}[Proof of \cref{prop:routing-regret}]
If $\widehat a_\eta(t)\neq a_\eta(t)$, then $\eta$ lies between
$\widehat\tau(t)$ and $\tau(t)$. Hence
\[
  |\tau(t)-\eta|\leq|\widehat\tau(t)-\tau(t)|.
\]
Multiplying by the disagreement indicator proves
\cref{eq:routing-regret-bound}; if the actions agree, the left-hand side is
zero. On the event in \cref{thm:local-bound}, regret is at most $B_\delta(t)$.
\end{proof}

\subsubsection{Uncertainty bands and selective abstention}
\label{app:uncertainty}

To allow abstention, let
\(
I_\delta(t)=[\widehat\tau(t)-B_\delta(t),
\widehat\tau(t)+B_\delta(t)]
\)
and define a three-way gate that uses the skill when the lower endpoint exceeds
$\eta$, does not use it when the upper endpoint is at most $\eta$, and
abstains otherwise.

\begin{corollary}[Selective correctness from a valid error band]
\label{cor:selective-uncertainty}
On the event
$|\widehat\tau(t)-\tau(t)|\leq B_\delta(t)$, every non-abstaining action agrees
with the threshold oracle. Moreover, abstention implies
$|\tau(t)-\eta|\leq2B_\delta(t)$. The conservative binary rule
\begin{equation}
  \widehat a_{\mathrm{safe}}(t)
  =\Ind\{\widehat\tau(t)-B_\delta(t)>\eta\},
  \label{eq:conservative-gate}
\end{equation}
never invokes the skill when $\tau(t)\leq\eta$ on this event, and its
threshold-relative regret is at most $2B_\delta(t)$.
\end{corollary}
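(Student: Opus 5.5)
Throughout, work on the event $E=\{|\widehat\tau(t)-\tau(t)|\leq B_\delta(t)\}$ from \cref{thm:local-bound}; every claim is deterministic on $E$. Membership in the band is then immediate: $\tau(t)\in I_\delta(t)$. Each of the three assertions reduces to comparing the position of $\eta$ with this interval, in the same style as the proof of \cref{cor:threshold}.

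\textbf{Non-abstaining actions are correct.} If the gate uses the skill, then $\widehat\tau(t)-B_\delta(t)>\eta$. Since $\tau(t)\geq\widehat\tau(t)-B_\delta(t)$, we get $\tau(t)>\eta$, so $a_\eta(t)=1$. If the gate skips, then $\widehat\tau(t)+B_\delta(t)\leq\eta$. Since $\tau(t)\leq\widehat\tau(t)+B_\delta(t)$, we get $\tau(t)\leq\eta$, so $a_\eta(t)=0$. The oracle $a_\eta(t)=\Ind\{\tau(t)>\eta\}$ is strict, and the skip rule uses a non-strict inequality; I will check that these match in the boundary case $\tau(t)=\eta$, which they do.

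\textbf{Abstention forces a small margin.} Abstention means $\widehat\tau(t)-B_\delta(t)\leq\eta<\widehat\tau(t)+B_\delta(t)$, so $|\widehat\tau(t)-\eta|\leq B_\delta(t)$. The triangle inequality with $|\widehat\tau(t)-\tau(t)|\leq B_\delta(t)$ then gives $|\tau(t)-\eta|\leq 2B_\delta(t)$.

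\textbf{The conservative rule.} The rule $\widehat a_{\mathrm{safe}}(t)$ fires exactly when the three-way gate would use the skill. By the first part, firing implies $\tau(t)>\eta$; contrapositively, it never fires when $\tau(t)\leq\eta$. For regret, use the definition \cref{eq:decision-regret} with $\widehat a_{\mathrm{safe}}$ in place of $\widehat a_\eta$. A disagreement can only be $\widehat a_{\mathrm{safe}}(t)=0$ with $a_\eta(t)=1$. The safe rule outputs $0$ in two cases: the gate skips or the gate abstains. A skip is correct by the first part, so a disagreement requires abstention. The second part then bounds the regret $|\tau(t)-\eta|$ by $2B_\delta(t)$, and regret is zero otherwise.

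The only delicate point is bookkeeping at the boundaries, namely strict versus non-strict inequalities and whether "abstain" includes the equality case. I expect no real obstacle.
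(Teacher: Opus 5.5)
Your proof is correct and takes essentially the same route as the paper. You use band membership of $\tau(t)$ to show that non-abstaining actions agree with the oracle, and the abstention margin to bound the conservative rule's only possible error (a skip when $\tau(t)>\eta$) by $2B_\delta(t)$. The one cosmetic difference is in the margin step: you derive $|\widehat\tau(t)-\eta|\le B_\delta(t)$ and apply the triangle inequality, whereas the paper splits into the cases $\tau(t)>\eta$ and $\tau(t)\le\eta$.
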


\begin{proof}
If the lower endpoint exceeds $\eta$, then
$\tau(t)\geq\widehat\tau(t)-B_\delta(t)>\eta$; if the upper endpoint is at most
$\eta$, then
$\tau(t)\leq\widehat\tau(t)+B_\delta(t)\leq\eta$. If the band crosses the
threshold and $\tau(t)>\eta$, then
$\widehat\tau(t)-B_\delta(t)\leq\eta$ and hence
$\tau(t)\leq\widehat\tau(t)+B_\delta(t)\leq\eta+2B_\delta(t)$.
The case $\tau(t)\leq\eta$ is symmetric. Finally, the conservative rule can
err only by skipping when $\tau(t)>\eta$; the preceding margin argument
bounds the resulting regret by $2B_\delta(t)$.
\end{proof}

\subsection{Fresh-task generalization}
\label{app:fresh-target-generalization}

The local bound concerns one fixed task. This section asks how often it
fails, and how much decision loss remains, when the target is drawn from a
deployment distribution. The first theorem averages over targets; the
neighbor-rate corollary additionally quantifies coverage from a sampled
support bank.

Condition on support-task identities, representations, and repetition counts.
Let $T\sim\TargetDist$ be independent of support outcomes, with weights
$w_i(T)$ measurable from the support features and $T$. The independent
execution assumptions of \cref{thm:local-bound} continue to hold. For a
possibly task-dependent threshold, write
$\widehat a_\eta(t)=\Ind\{\widehat\tau(t)>\eta(t)\}$ and
$a_\eta(t)=\Ind\{\tau(t)>\eta(t)\}$.

\begin{theorem}[Fresh-task gain generalization]
\label{thm:fresh-target-generalization}
Under \cref{eq:effect-smoothness}, for any $\delta\in(0,1)$,
\begin{equation}
  \Pr_{T,\mathrm{exec}}
  \left(
    |\widehat\tau(T)-\tau(T)|>B_\delta(T)
  \right)
  \leq\delta.
  \label{eq:fresh-target-error}
\end{equation}
For any threshold $\eta(t)$ fixed independently of the support execution
outcomes, the plug-in and oracle actions satisfy
\begin{align}
  &\Pr_{T,\mathrm{exec}}
  \left(
    \widehat a_\eta(T)\neq a_\eta(T)
  \right) \nonumber\\
  &\qquad\leq
  \delta+
  \Pr_{T\sim\TargetDist}
  \left(
    |\tau(T)-\eta(T)|\leq B_\delta(T)
  \right).                       \label{eq:fresh-target-action}
\end{align}
For every $b>0$, the last term is at most
\begin{equation}
  \Pr_T(B_\delta(T)>b)
  +\Pr_T(|\tau(T)-\eta(T)|\leq b).
  \label{eq:coverage-margin-split}
\end{equation}
The mean gain error also bounds mean regret:
\begin{align}
  \mathbb E_{T,\mathrm{exec}}R_\eta(\widehat a,T)
  &\leq\mathbb E_{T,\mathrm{exec}}|\widehat\tau(T)-\tau(T)|\nonumber\\
  &\leq
  L\mathbb E_T\rho_w(T)+\varepsilon_\phi
  +\mathbb E_T\sqrt{v_w(T)}.\label{eq:population-regret}
\end{align}
\end{theorem}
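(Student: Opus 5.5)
The plan is to condition on the target and then integrate, so that every claim reduces to the fixed-target results already proved. Since $T\sim\TargetDist$ is independent of the support execution outcomes, and the weights $w_i(T)$ depend only on support features and $T$, the conditional law of the execution outcomes given $T=t$ is the same fixed-target law used in \cref{thm:local-bound}. Under that law the weights $w_i(t)$ are deterministic and outcome-independent. Hence, for every $t$,
\[
\Pr_{\mathrm{exec}}\bigl(|\widehat\tau(t)-\tau(t)|>B_\delta(t)\bigr)\le\delta .
\]
Integrating over $T$ by the tower property gives \cref{eq:fresh-target-error}. The one point to check is measurability of the joint event in $(T,\mathrm{exec})$. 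This follows because $\widehat\tau(T)$ and $B_\delta(T)$ are measurable functions of $T$ and the outcomes. I will state it explicitly rather than belabor it.

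For the action bound \cref{eq:fresh-target-action}, I would split the disagreement event by the good event $G=\{|\widehat\tau(T)-\tau(T)|\le B_\delta(T)\}$. The complement of $G$ has probability at most $\delta$ by the first part. On $G$, apply \cref{cor:threshold} pointwise with the threshold $\eta(T)$. Its proof is purely algebraic, so it applies verbatim to a task-dependent threshold once $\eta(t)$ is fixed independently of the outcomes. It shows that disagreement forces $|\tau(T)-\eta(T)|\le B_\delta(T)$, an event depending only on $T$. A union bound then yields the claim. The split \cref{eq:coverage-margin-split} follows from the inclusion
\[
\{|\tau-\eta|\le B_\delta\}\subseteq\{B_\delta>b\}\cup\{|\tau-\eta|\le b\}.
\]
Again I would apply a union bound.

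For the regret bound \cref{eq:population-regret}, the first inequality is \cref{prop:routing-regret} applied pointwise and then integrated; no probability event is needed. For the second, I would reuse the decomposition \cref{eq:proof-decomp}. Step 2 of the proof of \cref{thm:local-bound} gives the deterministic bound $|b_w(t)|\le L\rho_w(t)+\varepsilon_\phi$. For the noise term $Z_w(t)$, I would not integrate the Hoeffding tail, since that would produce an extra $\sqrt{\log}$ constant. Instead I would use Jensen's inequality, $\mathbb E|Z_w(t)|\le\sqrt{\mathbb E Z_w(t)^2}$. Independence and centering give
\[
\mathbb E Z_w(t)^2=\sum_i\frac{w_i^2}{r_i^2}\sum_{\ell}\operatorname{Var}(D_{i\ell}) .
\]
Because $D_{i\ell}\in[-1,1]$, each variance is at most $1$, so $\mathbb E Z_w(t)^2\le\sum_i w_i^2/r_i=v_w(t)$. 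Conditioning on $T$ and then taking $\mathbb E_T$ completes the bound.

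I expect the main obstacle to be bookkeeping rather than any single estimate. Two things need care. First, the conditioning must be clean: the fixed-target theorem has to be invoked under the conditional law given $T$, which uses both the independence of $T$ from the outcomes and the outcome-independence of the weights and thresholds. Second, the variance constant must come out exactly $1$ so that $\sqrt{v_w}$ appears without extra factors. If the authors intend a different constant, the Jensen step is where I would adjust.
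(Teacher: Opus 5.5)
Your proposal is correct and follows the paper's proof essentially step for step. You condition on $T$ and average the fixed-target failure probability from \cref{thm:local-bound}, apply \cref{cor:threshold} on the good event with a union bound, use the margin-event inclusion for the split, and bound the mean noise term via $\operatorname{Var}(Z_w(t))\le v_w(t)$ and Jensen before adding the transfer-mismatch bound and invoking \cref{prop:routing-regret}; your explicit remarks on measurability and on why the margin event depends only on $T$ are details the paper leaves implicit.
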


\begin{proof}
\textit{Prediction error.}
For every fixed target, \cref{thm:local-bound} bounds the execution failure
probability by $\delta$. Averaging these conditional probabilities over $T$
proves \cref{eq:fresh-target-error}.

\textit{Decision error.}
By \cref{cor:threshold}, disagreement on the
bound's event requires $|\tau(T)-\eta(T)|\leq B_\delta(T)$, proving
\cref{eq:fresh-target-action}. This margin event is contained in
\[
  \{B_\delta(T)>b\}
  \cup
  \{|\tau(T)-\eta(T)|\leq b\},
\]
which gives \cref{eq:coverage-margin-split}.

\textit{Mean regret.}
In \cref{eq:proof-decomp}, independence and $D_{i\ell}\in[-1,1]$ give
$\operatorname{Var}(Z_w(t))\leq v_w(t)$. Jensen's
inequality bounds its expected absolute value by $\sqrt{v_w(t)}$.
Adding the transfer-mismatch bound, averaging over $T$, and applying
\cref{eq:routing-regret-bound} proves \cref{eq:population-regret}.
\end{proof}

\paragraph{Margin consequence.}
If $B_\delta(T)\leq b$ almost surely and
$\Pr_T(|\tau(T)-\eta(T)|\leq u)\leq Cu^\alpha$ for $u\in[0,b]$, with
$C,\alpha>0$, then \cref{eq:fresh-target-action} immediately gives
$\Pr_{T,\mathrm{exec}}(\widehat a_\eta(T)\neq a_\eta(T))
\leq\delta+Cb^\alpha$. This requires few targets near the decision threshold.

\paragraph{Turning support size into a coverage bound.}
As in local $k$-NN regression analysis \citep{kpotufe2011knn}, the next condition
requires source probability mass near every possible target representation.
It allows different source and target distributions, while ruling out targets
with no nearby support. This corollary specializes the weighted bound to
uniform neighbor averaging.

\begin{corollary}[$k$-neighbor generalization rate]
\label{cor:knn-generalization}
Under the assumptions of \cref{thm:local-bound}, let the support
representations $X_i=\phi(t_i)$ be $n$ independent draws from
a source distribution $\SourceDist$, and let $T\sim\TargetDist$ be independent.
Assume the representation space has diameter at most one and that, for every
$z$ in the support of the target representation distribution and every $u\in[0,1]$,
\begin{equation}
  \Pr_{X\sim\SourceDist}(d(X,z)\leq u)\geq c u^\nu
  \label{eq:target-lower-mass}
\end{equation}
for constants $c\in(0,1]$ and a local dimension exponent $\nu>0$.
Give the $k$ nearest support tasks equal
weight and execute each support task for $r$ independent repetitions. If
$u_{n,k}=(2k/(cn))^{1/\nu}\leq1$, then, with probability at least
$1-\delta-e^{-k/4}$ over the support draw, fresh target, and executions,
\begin{equation}
  |\widehat\tau(T)-\tau(T)|
  \leq
  L\left(\frac{2k}{cn}\right)^{1/\nu}
  +\varepsilon_\phi
  +\sqrt{\frac{2\log(2/\delta)}{kr}}.
  \label{eq:knn-generalization-rate}
\end{equation}
\end{corollary}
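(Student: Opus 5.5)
The plan is to combine \cref{thm:local-bound}, applied conditionally on the support draw and the target, with a coverage event controlling the $k$-th nearest-neighbor distance. With $k$ equal weights, $w_i=1/k$ on $N_k(T)$. Hence $v_w=1/(kr)$, so the noise term of \cref{eq:main-error-bound} is exactly $\sqrt{2\log(2/\delta)/(kr)}$. The radius satisfies $\rho_w(T)=k^{-1}\sum_{i\in N_k}d(\phi(T),X_i)\leq d_{(k)}(T)$, the distance from $\phi(T)$ to its $k$-th nearest support representation. It therefore suffices to show $\Pr(d_{(k)}(T)>u_{n,k})\leq e^{-k/4}$ and then take a union bound with the execution failure event.

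\textbf{Step 1 (coverage).} Condition on $T$, so that $z=\phi(T)$ is fixed and lies in the target support almost surely. The support points $X_i$ are i.i.d.\ from $\SourceDist$ and independent of $T$. Write $M=\sum_i\Ind\{d(X_i,z)\leq u_{n,k}\}$; then $M\sim\mathrm{Bin}(n,p)$. Since $u_{n,k}\leq1$, \cref{eq:target-lower-mass} gives
\[
p\geq c\,u_{n,k}^\nu=2k/n,
\]
so $\mathbb E M\geq 2k$. The event $d_{(k)}(z)>u_{n,k}$ equals $\{M<k\}\subseteq\{M\leq \mathbb E M/2\}$. The multiplicative Chernoff bound $\Pr(M\leq(1-\gamma)\mu)\leq e^{-\gamma^2\mu/2}$ with $\gamma=1/2$ and $\mu\geq2k$ gives $e^{-\mu/8}\leq e^{-k/4}$. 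If $\mu$ exceeds $2k$, I would use monotonicity of the binomial lower tail in $p$, or equivalently apply the bound with threshold $k\leq\mu/2$ directly. Integrating over $T$ gives the unconditional bound $e^{-k/4}$.

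\textbf{Step 2 (local bound).} Condition on the support representations and on $T$. The neighbor set and weights are then functions of features only, so they are independent of the support outcomes, which is exactly the hypothesis of \cref{thm:local-bound}. Provided $n\geq k$ (implied by $u_{n,k}\leq1$ and $c\leq1$, since then $2k\leq cn\leq n$), the theorem gives, with conditional probability at least $1-\delta$,
\[
|\widehat\tau(T)-\tau(T)|\leq L\rho_w(T)+\varepsilon_\phi+\sqrt{2\log(2/\delta)/(kr)}.
\]
This step needs the execution outcomes to be independent of the support draw given the task identities, as in \cref{thm:fresh-target-generalization}. Averaging the conditional probability over the support draw and $T$ preserves the $1-\delta$ guarantee.

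\textbf{Step 3 (combine).} On the intersection of the two good events, $\rho_w(T)\leq u_{n,k}=(2k/(cn))^{1/\nu}$, and \cref{eq:knn-generalization-rate} follows. By the union bound, the failure probability is at most $\delta+e^{-k/4}$.

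The main obstacle is Step 1's Chernoff constant, specifically confirming that $\mu\geq2k$ yields $e^{-k/4}$. The form $e^{-\gamma^2\mu/2}$ with $\gamma=1/2$ at $\mu=2k$ gives exactly $e^{-k/4}$. For larger $\mu$ I would check that applying the bound at deviation level $k$ only improves it. A second concern is that the smoothness assumption \cref{eq:effect-smoothness} is applied to the pairs $(T,t_i)$, which requires every drawn support task to be eligible with respect to the target. I would state this as inherited from the hypotheses of \cref{thm:local-bound}.
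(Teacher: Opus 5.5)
Your proposal is correct and follows the paper's proof essentially step for step. You bound the count of support points in the radius-$u_{n,k}$ ball around $\phi(T)$ by a multiplicative Chernoff bound, giving $\rho_w(T)\le u_{n,k}$ except with probability $e^{-k/4}$; you then apply \cref{thm:local-bound} conditionally on the support and target with $v_w=1/(kr)$, and combine the two failure events by a union bound. Your extra checks are sound refinements that the paper leaves implicit: $n\ge k$ follows from $u_{n,k}\le 1$, using $k\le\mu/2$ directly removes any need for a monotonicity argument, and executions must be independent of the support draw.
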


\begin{proof}
\textit{Coverage.}
Fix a target representation $z$ and let $M_z(u)$ count support points in its
radius-$u$ ball. At $u=u_{n,k}$,
\cref{eq:target-lower-mass} gives
$\mathbb E[M_z(u)]\geq2k$. A multiplicative Chernoff bound therefore yields
\(
  \Pr(M_z(u)<k)\leq e^{-k/4}.
\)
After integrating over the fresh target, the $k$th-neighbor radius is at most
$u_{n,k}$ with the same probability. On this event the uniform-weight radius
$\rho_w(T)$ is at most $u_{n,k}$, while
$v_w(T)=1/(kr)$.

\textit{Gain error.}
Apply \cref{thm:local-bound} conditionally on the support and target.
A union bound combines the coverage-failure probability $e^{-k/4}$ with
the execution-failure probability $\delta$.
\end{proof}

For fixed $\delta$, balancing $(k/n)^{1/\nu}$ and $(kr)^{-1/2}$ gives order
$(nr)^{-1/(\nu+2)}$ for the coverage and noise terms in the error radius when
the optimizing $k$ is feasible.
Representation error remains. The rate requires source mass near target
tasks as in \cref{eq:target-lower-mass}; it does not follow from a
within-inventory audit alone.

\subsection{Evaluating frozen and cross-fitted policies}
\label{app:finite-panel-value}

Action--outcome independence gives an unbiased estimate of policy success
on a fixed set of target tasks. This calculation uses no smoothness or
coverage assumptions.

\begin{proposition}[Frozen finite-panel value]
\label{prop:frozen-value}
Condition on target tasks $t_1,\ldots,t_m$ and on a policy
$\widehat\pi$ fixed independently of their execution outcomes. Then
\begin{equation}
  \widehat V(\widehat\pi)
  =\frac1m\sum_{j=1}^m
    \bigl[\widehat\pi(t_j)Y_{j1}
    +(1-\widehat\pi(t_j))Y_{j0}\bigr]
  \label{eq:frozen-value}
\end{equation}
is unbiased for the policy's mean success on that finite target inventory.
Here $Y_{ja}$ is the observed success, or its repeat mean, in condition $a$.
The analogous estimator for tokens in the selected condition is unbiased
under the same independence condition and integrable token costs.
\end{proposition}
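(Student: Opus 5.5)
The plan is a direct computation of the conditional expectation of each summand in \cref{eq:frozen-value}, followed by linearity of expectation. Throughout, condition on the target tasks $t_1,\ldots,t_m$. The policy's mean success on this finite inventory is
\[
  V(\widehat\pi)=\frac1m\sum_{j=1}^m\bigl[\widehat\pi(t_j)\mu_1(t_j)+(1-\widehat\pi(t_j))\mu_0(t_j)\bigr].
\]
If $\widehat\pi$ is itself random, for example because it was built from support outcomes, this quantity is read conditionally on $\widehat\pi$. We take $Y_{ja}$ to be the success indicator, or the repeat mean, of target $t_j$ under condition $a$. By \cref{eq:conditional-effect}, and with repeat means by linearity over the repetitions, $\mathbb E[Y_{ja}\mid t_j]=\mu_a(t_j)$.

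The key step is to use the hypothesis that $\widehat\pi$ is fixed independently of the target execution outcomes. Under this hypothesis, conditioning on $\widehat\pi$ does not change the outcome distribution: $\mathbb E[Y_{ja}\mid t_j,\widehat\pi]=\mu_a(t_j)$. Since $\widehat\pi(t_j)\in\{0,1\}$ is measurable with respect to $\widehat\pi$ and $t_j$, we can pull it out of the conditional expectation:
\[
  \mathbb E\bigl[\widehat\pi(t_j)Y_{j1}+(1-\widehat\pi(t_j))Y_{j0}\,\big|\,t_j,\widehat\pi\bigr]
  =\widehat\pi(t_j)\mu_1(t_j)+(1-\widehat\pi(t_j))\mu_0(t_j).
\]
Averaging over $j$ by linearity gives $\mathbb E[\widehat V(\widehat\pi)\mid t_{1:m},\widehat\pi]=V(\widehat\pi)$. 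Taking a further expectation over $\widehat\pi$ shows that the estimator is also unbiased for the expected policy value. Boundedness of $Y_{ja}\in[0,1]$ guarantees integrability, so no further conditions are needed for success.

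The token statement follows by the same argument with $Y_{ja}$ replaced by the token cost $C_{ja}$ in condition $a$. Here, however, costs are not bounded a priori, so integrability $\mathbb E|C_{ja}|<\infty$ is exactly what we need to justify linearity and the pull-out property. This is why the statement assumes integrable token costs.

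The main obstacle is stating precisely what independence is required. The argument only needs the action at each $t_j$ to be conditionally independent of that target's own outcomes $(Y_{j0},Y_{j1})$. Cross-fitted or leave-one-task-out policies, which exclude the target from its own support, satisfy this condition. A policy tuned on the evaluated outcomes does not, and unbiasedness fails for it. I would note this explicitly at the point where the pull-out step is used.
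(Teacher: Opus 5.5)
Your proof is correct and is essentially the paper's argument. The paper conditions on the tasks and the frozen actions, uses independence to get $\mathbb E[Y_{ja}\mid t_{1:m},a_{1:m}]=\mu_a(t_j)$, and applies linearity. Integrability covers the token version.

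One caution concerns your closing remark, not the proof of the stated result. Independence of $a_j=\widehat\pi(t_j)$ from that target's own outcomes $(Y_{j0},Y_{j1})$ does not justify conditioning on the whole of $\widehat\pi$, which your pull-out step does. Under leave-one-task-out, the other actions $a_k$ with $k\neq j$ are computed from $Y_{j0},Y_{j1}$. Such policies therefore satisfy only the marginal identity $\mathbb E[a_jY_{j1}+(1-a_j)Y_{j0}\mid t_{1:m}]=\mathbb E[a_j\mid t_{1:m}]\mu_1(t_j)+(1-\mathbb E[a_j\mid t_{1:m}])\mu_0(t_j)$. This gives unbiasedness for the expected value of the fitted rules, not conditionally on the realized action vector. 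The paper draws exactly this distinction immediately after the proposition.
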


\begin{proof}
Conditioning on tasks and frozen actions $a_j=\widehat\pi(t_j)$,
independence gives $\mathbb E[Y_{ja}\mid t_{1:m},a_{1:m}]=\mu_a(t_j)$
for $a\in\{0,1\}$. Substituting into \cref{eq:frozen-value} and applying
linearity proves the claim. The same argument applies to any integrable
condition-specific cost.
\end{proof}

\paragraph{Leave-one-task-out evaluation.}
Suppose the full predictor configuration is fixed independently of evaluation
outcomes, and task executions are independent conditional on task identities.
An action $a_j$ fitted using only other tasks' outcomes is then independent
of $(Y_{j0},Y_{j1})$, giving the marginal identity
\[
  \mathbb E[a_jY_{j1}+(1-a_j)Y_{j0}\mid t_{1:m}]
  =\mathbb E[a_j\mid t_{1:m}]\mu_1(t_j)
   +(1-\mathbb E[a_j\mid t_{1:m}])\mu_0(t_j).
\]
Summing gives the expected value of the fitted rules. Because actions share
training outcomes, this marginal identity differs from conditioning on the
entire action vector in \cref{prop:frozen-value}. Configuration selection
still requires separate validation. Fixed-prediction bootstrap summaries
measure resampling variability of the fitted results; full-procedure
uncertainty also includes refitting and configuration selection.

\section{Empirical Protocols and Supplementary Results}
\label{app:empirical-details}

The main results use within-inventory, leave-one-task-out $k=6$ prediction.
RQ3 varies the evidence against fixed three-repeat targets; Ridge uses
grouped train/validation/test splits. The prospective panels in
\cref{app:generalization-stress} retain their historical frozen configurations.

\subsection{Execution records and evaluation inventory}

\Cref{sec:experiments} specifies the benchmarks, execution counts, encoders,
and main policies. Each agent has separate outcome and token ledgers.
Transport failures and incomplete attempts are excluded; completed empty
outputs remain failures under the benchmark policy. The retained evidence
comprises paired responses, predictions, embeddings, evaluator outputs,
protocol manifests, and integrity hashes.

The Qwen-Turbo and GLM-5.3-Flash main inventories each contain 4,825 evaluable
tasks and 12,690 condition records. They share 1,536-dimensional question
vectors and the offline gate definition, with stack-specific execution limits
recorded separately, including GLM's ToolQA step cap and low-effort LogicBench
budget. DeepSeek contributes 28,950 condition records over the same 4,825
tasks and uses the shared 1,536-dimensional vectors for the main analysis.
Its separate Ridge controls retain their archived 1,024-dimensional
\texttt{Qwen3-Embedding-0.6B} vectors. Its SpreadsheetBench evaluation retains original and
corrected flags for 18 corrected records and excludes task 42930 because its
test case is absent. Task--skill incidence and bundle definitions appear in
\cref{app:task-skill-structure}.

\subsection{Prediction and baseline controls}

\paragraph{Score support and thresholds.}
\label{app:score-protocol}
All main results use same-family, cosine-weighted $k=6$ neighbors and
the zero-threshold signed-gain rule. Negative cosine weights are clipped to
zero; if all selected weights vanish, the neighbors receive equal weights.
Empty support yields score zero and Skip skill. Each target is excluded from
its support. The with-skill-only and paired scores share exactly the same
neighbors and weights, averaging skill-enabled outcomes or paired gains,
respectively. Repetition counts are specified in
\cref{sec:experiments}.

\paragraph{Configuration provenance.}
\label{app:configuration-origin}
The common $k=6$ rule harmonizes the earlier benchmark-specific analyses
retrospectively, after trajectory collection. Leave-one-task-out exclusion
prevents direct use of a target's outcomes for its score; configuration
selection is outside this evaluation. Keeping the same protocol with $k=3$
yields 61.83\% mean success, a 4.63\% matched-rate advantage, and 26.87\%
execution-token savings; $k=6$ yields 63.12\%, 4.33\%, and 20.83\%, respectively.
These averages weight all 15 panels equally. Both rules have positive
matched-rate point estimates throughout: $k=6$ has higher mean success,
whereas $k=3$ saves more tokens.

\Cref{tab:no-history-controls} collects the outcome-free relevance and
self-judge diagnostics that complement the main prediction and policy
results. \Cref{tab:predictor-baselines} compares estimators given the same
paired history on Qwen-Turbo.

\begin{table}[H]
\centering
\caption{No-history baseline controls. BA gain is the absolute balanced-accuracy advantage of \textsc{SkillDelta} over self-judge.}
\label{tab:no-history-controls}
\scriptsize
\setlength{\tabcolsep}{4pt}
\begin{tabular}{@{}lrrrrl@{}}
\toprule
& \multicolumn{2}{c}{Relevance gate} & \multicolumn{2}{c}{Self-judge} & \\
\cmidrule(lr){2-3}\cmidrule(lr){4-5}
Benchmark & Success (\%) & Saved tokens (\%) & Use (\%) & \shortstack{Router\\tokens/task} & BA gain (\%) [95\% CI] \\
\midrule
\multicolumn{6}{l}{\textit{Qwen-Turbo}} \\
ToolQA & 36.99 & 3.07 & 94.2 & 1164 & +30.41 [27.99, 32.75] \\
MedCalc-Bench & 60.00 & 25.31 & 98.1 & 1583 & +19.65 [17.11, 22.21] \\
BigCodeBench & 50.18 & 46.76 & 99.5 & 4976 & +1.77 [-2.49, 6.09] \\
LogicBench & 83.03 & 35.69 & 98.3 & 1314 & +8.55 [4.78, 12.27] \\
SpreadsheetBench & 18.30 & 35.53 & 93.7 & 2955 & +8.01 [-2.35, 18.68] \\
\midrule
\multicolumn{6}{l}{\textit{GLM-5.3-Flash}} \\
ToolQA & 40.84 & 11.17 & 99.4 & 1322 & +29.86 [26.69, 32.87] \\
MedCalc-Bench & 83.55 & 37.92 & 99.8 & 1691 & +27.19 [23.76, 30.48] \\
BigCodeBench & 61.71 & 25.99 & 98.9 & 5231 & +6.63 [2.60, 10.55] \\
LogicBench & 74.25 & 21.49 & 96.3 & 2224 & +6.60 [3.14, 10.01] \\
SpreadsheetBench & 57.89 & 16.51 & 91.2 & 3264 & +16.19 [11.23, 21.07] \\
\midrule
\multicolumn{6}{l}{\textit{DeepSeek-V4-Flash}} \\
ToolQA & 30.96 & -0.66 & 96.4 & 1396 & +21.54 [19.11, 23.99] \\
MedCalc-Bench & 81.48 & 10.80 & 97.8 & 1680 & +26.75 [24.08, 29.40] \\
BigCodeBench & 59.54 & 39.02 & 96.0 & 5514 & -0.03 [-3.36, 3.21] \\
LogicBench & 80.61 & 24.88 & 91.3 & 1821 & +11.50 [7.02, 15.95] \\
SpreadsheetBench & 51.38 & 20.43 & 44.4 & 3346 & +13.49 [6.57, 20.36] \\
\bottomrule
\end{tabular}
\par\vspace{1mm}
\begin{minipage}{0.98\linewidth}\scriptsize
The relevance gate uses task--skill TF--IDF cosine and an inventory-median threshold. DeepSeek retains its published relevance scores. BA intervals use 10,000 paired task resamples with decisions fixed. Self-judge success and total token costs appear in \cref{tab:rq2-policy-summary}.
\end{minipage}
\end{table}

\begin{table}[H]
\centering
\caption{Paired-evidence estimator controls on the complete Qwen-Turbo inventories.}
\label{tab:predictor-baselines}
\small
\setlength{\tabcolsep}{4pt}
\begin{tabular}{@{}llrrr@{}}
\toprule
Benchmark & Estimator & AUROC & Success (\%) & Tokens saved (\%) \\
\midrule
ToolQA & \textsc{SkillDelta} & 0.879 & 44.55 & 10.37 \\
 & TF--IDF kNN & 0.877 & 44.27 & 10.28 \\
 & Family mean & 0.786 & 45.38 & 7.55 \\
\midrule
MedCalc-Bench & \textsc{SkillDelta} & 0.781 & 70.64 & 14.82 \\
 & TF--IDF kNN & 0.789 & 70.55 & 13.33 \\
 & Family mean & 0.785 & 71.55 & 7.71 \\
\midrule
BigCodeBench & \textsc{SkillDelta} & 0.527 & 47.36 & 49.41 \\
 & TF--IDF kNN & 0.582 & 48.94 & 48.11 \\
 & Family mean & 0.474 & 49.56 & 19.25 \\
\midrule
LogicBench & \textsc{SkillDelta} & 0.621 & 85.22 & 29.05 \\
 & TF--IDF kNN & 0.546 & 85.09 & 27.67 \\
 & Family mean & 0.622 & 87.46 & 16.02 \\
\midrule
SpreadsheetBench & \textsc{SkillDelta} & 0.580 & 17.79 & 36.16 \\
 & TF--IDF kNN & 0.582 & 17.04 & 40.33 \\
 & Family mean & 0.322 & 19.80 & 0.00 \\
\bottomrule
\end{tabular}
\par\vspace{1mm}
\begin{minipage}{0.98\linewidth}\scriptsize
All methods use the same paired cells and zero threshold. TF--IDF kNN replaces the representation while retaining same-family, nonnegative-cosine $k=6$ aggregation; its word unigram/bigram vocabulary uses only inventory text. Family mean averages all other tasks in the group. Savings are relative to Always-on.
\end{minipage}
\end{table}

\subsection{Policy value and uncertainty}

\Cref{tab:main-intervals} reports uncertainty for the main metrics;
\cref{fig:matched-rate-three-stack} compares matched-rate selection value
across agents and benchmarks. Predictions and actions are
held fixed during resampling; \cref{app:finite-panel-value} distinguishes
this analysis from uncertainty for the full fitting procedure.

\begin{table}[H]
\centering
\caption{Task-bootstrap intervals for the main results. Gains are absolute success-rate differences from Always-off. Point estimates appear in \cref{fig:main-results,tab:rq2-policy-summary}. Scores and actions are fixed across 10,000 task resamples.}
\label{tab:main-intervals}
\scriptsize
\setlength{\tabcolsep}{4pt}
\begin{tabular}{@{}llll@{}}
\toprule
Stack & Benchmark & AUROC: 95\% CI & Success gain (\%): 95\% CI \\
\midrule
Qwen-Turbo & ToolQA & [0.849, 0.907] & [13.9, 18.0] \\
Qwen-Turbo & MedCalc-Bench & [0.751, 0.810] & [24.5, 30.4] \\
Qwen-Turbo & BigCodeBench & [0.478, 0.577] & [1.8, 5.5] \\
Qwen-Turbo & LogicBench & [0.574, 0.667] & [7.2, 12.2] \\
Qwen-Turbo & SpreadsheetBench & [0.478, 0.681] & [-0.8, 5.5] \\
\midrule
GLM-5.3-Flash & ToolQA & [0.792, 0.875] & [7.5, 10.7] \\
GLM-5.3-Flash & MedCalc-Bench & [0.819, 0.891] & [7.9, 11.9] \\
GLM-5.3-Flash & BigCodeBench & [0.520, 0.616] & [5.9, 9.7] \\
GLM-5.3-Flash & LogicBench & [0.550, 0.634] & [12.0, 17.4] \\
GLM-5.3-Flash & SpreadsheetBench & [0.638, 0.745] & [21.6, 31.1] \\
\midrule
DeepSeek-V4-Flash & ToolQA & [0.751, 0.811] & [10.8, 14.1] \\
DeepSeek-V4-Flash & MedCalc-Bench & [0.793, 0.852] & [13.8, 17.8] \\
DeepSeek-V4-Flash & BigCodeBench & [0.526, 0.604] & [6.8, 10.1] \\
DeepSeek-V4-Flash & LogicBench & [0.599, 0.693] & [7.1, 11.4] \\
DeepSeek-V4-Flash & SpreadsheetBench & [0.607, 0.721] & [10.2, 16.5] \\
\bottomrule
\end{tabular}
\par\vspace{1mm}
\begin{minipage}{0.98\linewidth}\scriptsize
The gain interval includes zero for Qwen-Turbo SpreadsheetBench. Percentile intervals condition on the fixed scores and actions; they exclude configuration-selection uncertainty.
\end{minipage}
\end{table}

\begin{figure}[H]
  \centering
  \includegraphics[width=\linewidth]{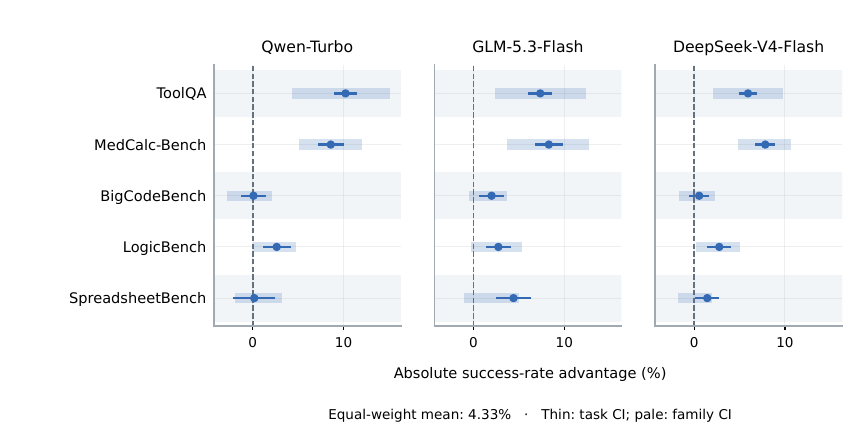}
  \caption{Selection value at the same expected skill-use rate.
  Points show the absolute success-rate advantage over random activation.
  Bars are 95\% intervals from 5,000 task or whole-family resamples, keeping
  actions fixed and recomputing the matching rate. All estimates are
  positive; some intervals include zero.}
  \label{fig:matched-rate-three-stack}
\end{figure}

\paragraph{Within-group selection and between-group allocation.}
\label{app:selection-components}
For a fixed panel, let $a_i$ be its retained action, $p$ its mean use rate, and
$p_{g(i)}$ the mean rate in task $i$'s recorded group. The advantage over
global matched-rate random activation decomposes exactly as
\begin{equation}
  \frac{1}{N}\sum_i(a_i-p)\widehat D_i
  =\underbrace{\frac{1}{N}\sum_i(a_i-p_{g(i)})\widehat D_i}_{\text{within-group selection}}
   +\underbrace{\frac{1}{N}\sum_i(p_{g(i)}-p)\widehat D_i}_{\text{between-group allocation}}.
\end{equation}
The first term compares against random activation at each group's use rate;
the second measures the value of allocating more use to groups with larger
average gains. Groups follow the recorded task structure: skill families for
ToolQA, MedCalc-Bench, and LogicBench, focal groups for BigCodeBench, and task
types for SpreadsheetBench. They need not identify identical skill bundles.

\Cref{fig:selection-components} keeps every main-table action and outcome
unchanged. Across the 15 panels, the equal-weight within-group mean is 0.61\%
[0.37, 0.93], and the between-group mean is 3.72\% [3.47, 3.91], summing to
the 4.33\% advantage. These 95\% intervals use 5,000 bootstrap draws
of tasks within groups, preserving group sizes and sharing task draws across
agents within each benchmark. Matching rates are recomputed in each draw.
The intervals condition on the observed groups and fixed actions; they exclude
predictor refitting and new execution noise.

ToolQA provides the strongest evidence for within-group selection: its
three-agent mean is 4.41\%, and every agent's interval excludes zero. Only
five of the 15 within-group point estimates are positive. BigCodeBench ranges
from $-1.67\%$ to $-0.25\%$; the Qwen-Turbo interval is entirely negative,
while the other two include zero. All remaining within-group intervals
include zero. The positive aggregate advantage thus combines reliable local
selection on ToolQA with substantial allocation value across groups.

\begin{figure}[ht]
  \centering
  \includegraphics[width=\linewidth]{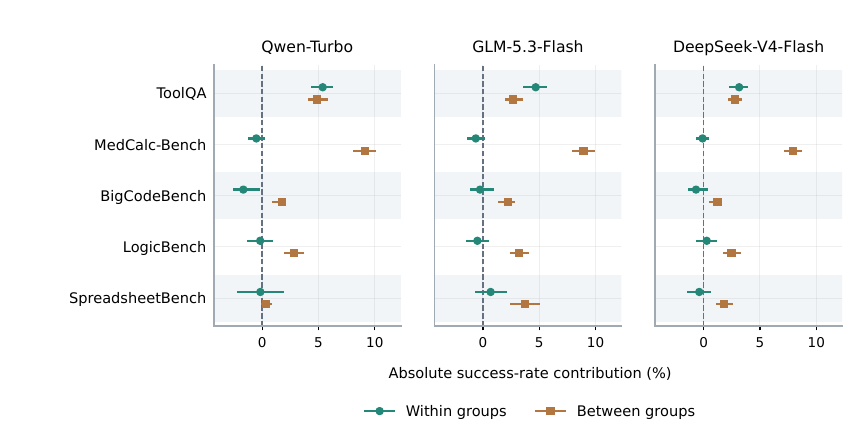}
  \caption{Where selection value comes from.
  Within-group selection and between-group allocation sum to the advantage
  over global matched-rate random activation. Bars show conditional 95\%
  intervals from task resampling within the recorded groups.}
  \label{fig:selection-components}
\end{figure}

\subsection{Single- and dual-Ridge policy controls}

These controls use identical grouped train/validation/test partitions across
the three agents, with a separate outcome ledger for each. Both Ridge variants
share a signed-gain head; Dual Ridge adds an independent incremental-token
head. Five-fold grouped cross-validation selects the Ridge penalties on
training data. Validation selects the least-token policy with success no
lower than Always-on; matched-rule single Ridge uses the same search with
$\lambda=0$. Test outcomes enter only the final evaluation.

\begin{table}[htbp]
  \centering
  \caption{Ridge policy controls on identical fixed test tasks across three agents.
  Acc. and Save are percentages; saving is relative to Always-on.
  $\Delta$ is the absolute accuracy difference.
  Single Ridge uses the same validation search as Dual Ridge with $\lambda=0$.}
  \label{tab:single-dual-ridge}
  \scriptsize
  \setlength{\tabcolsep}{3pt}
  \begin{tabular}{@{}lrr*{2}{rr}l@{}}
    \toprule
    & & & \multicolumn{2}{c}{Single Ridge} & \multicolumn{2}{c}{Dual Ridge} & Dual--On \\
    \cmidrule(lr){4-5}\cmidrule(lr){6-7}
    Benchmark & $N$ & On Acc. & Acc. & Save & Acc. & Save & $\Delta$ (\%) [95\% CI] \\
    \midrule
    \multicolumn{8}{l}{\textit{Qwen-Turbo}} \\
    ToolQA & 286 & 44.76 & 43.71 & 9.09 & 43.71 & 11.27 & $-1.05$ [-3.51,+1.40] \\
    MedCalc-Bench & 220 & 75.91 & 72.27 & 19.17 & 73.64 & 19.55 & $-2.27$ [-5.07,+0.46] \\
    BigCodeBench & 228 & 50.88 & 50.88 & 4.21 & 50.44 & 27.90 & $-0.44$ [-3.07,+2.19] \\
    LogicBench & 152 & 89.04 & 87.94 & 22.83 & 87.94 & 22.83 & $-1.10$ [-4.25,+2.03] \\
    SpreadsheetBench & 80 & 18.75 & 18.75 & 0.00 & 18.75 & 0.00 & $+0.00$ [+0.00,+0.00] \\
    \midrule
    \multicolumn{8}{l}{\textit{GLM-5.3-Flash}} \\
    ToolQA & 286 & 43.71 & 46.15 & 16.26 & 45.45 & 21.98 & $+1.75$ [-1.74,+4.91] \\
    MedCalc-Bench & 220 & 87.27 & 81.82 & 60.33 & 78.64 & 62.44 & $-8.64$ [-14.55,-2.70] \\
    BigCodeBench & 228 & 63.60 & 63.60 & 0.00 & 63.60 & 1.87 & $+0.00$ [-1.32,+1.32] \\
    LogicBench & 152 & 85.75 & 85.09 & 6.24 & 84.43 & 5.08 & $-1.32$ [-2.83,-0.22] \\
    SpreadsheetBench & 80 & 67.50 & 66.25 & 4.04 & 63.75 & 10.24 & $-3.75$ [-8.75,+1.25] \\
    \midrule
    \multicolumn{8}{l}{\textit{DeepSeek-V4-Flash}} \\
    ToolQA & 286 & 35.55 & 35.31 & 0.75 & 35.78 & 4.77 & $+0.23$ [-0.94,+1.52] \\
    MedCalc-Bench & 220 & 90.76 & 92.42 & 8.58 & 92.12 & 10.11 & $+1.36$ [-0.61,+3.67] \\
    BigCodeBench & 228 & 59.50 & 59.50 & 3.87 & 59.36 & 18.66 & $-0.15$ [-1.61,+1.17] \\
    LogicBench & 152 & 87.72 & 86.62 & 6.48 & 87.28 & 12.29 & $-0.44$ [-2.37,+1.40] \\
    SpreadsheetBench & 80 & 57.08 & 55.42 & 16.80 & 55.42 & 16.80 & $-1.67$ [-5.42,+2.92] \\
    \bottomrule
  \end{tabular}
  \par\vspace{1mm}
  \begin{minipage}{0.97\textwidth}\scriptsize
  All stacks reuse the archived task-group partitions and five training folds.
  Both Ridge heads include an unpenalized intercept; penalties minimize training OOF MSE.
  Single Ridge and Dual Ridge minimize validation tokens
  subject to success no lower than Always-on, with explicit Always-on fallback.
  Dual Ridge predicts incremental cost in thousands of tokens and uses $\widehat g-\lambda\widehat c>\eta$.
  The 95\% intervals use 5,000 task-group bootstrap draws with the policy fixed.
  Qwen and GLM use their canonical outcomes and shared 1,536-dimensional question vectors;
  DeepSeek uses three-repeat means and its archived 1,024-dimensional vectors.
  \end{minipage}
\end{table}

The cost head yields additional savings in several panels, with varying test
success. On DeepSeek SpreadsheetBench, matched-rule single and Dual Ridge
select identical test actions, so their saving comes from the shared policy
search. The intervals in \cref{tab:single-dual-ridge} describe the fixed test
policies and exclude refitting and policy-selection uncertainty.
The supplementary reproduction files include the fixed splits, fitted
coefficients, and per-task predictions. An independent offline refit
reproduces the selected penalties and reported policy metrics on all 15 panels.

\subsection{RQ3 sweep settings}
\label{app:rq3-settings}

All sweeps in \cref{fig:rq3-theory-scaling} use fixed three-repeat target
gains, shared question vectors, same-family support, nonnegative cosine
weights, and signed-gain aggregation. Nested support pools are averaged over
five orderings, with $k=\min(n,3)$. The pool and neighborhood sweeps stop at
99, 19, and 39 supports for ToolQA, MedCalc-Bench, and LogicBench, respectively,
retaining every target at every point. The repetition sweep fixes $k=3$ and
independently samples one to three executions without replacement from each
support task's two arms, using 50 draws when fewer than three are retained.
These three-repeat diagnostic targets are distinct from the single-execution
Qwen/GLM ToolQA and MedCalc headline outcomes. When only three-repeat success
counts are retained, arm-wise subsampling uses the corresponding binary
multiset; it does not assume a chronological pairing of repetitions.

\subsection{Supplementary archives and reproduction}

The reproduction guide provides task-level recomputation of main predictions,
policy metrics, intervals, and RQ3 curves from the retained outcomes and
embeddings. \Cref{fig:gain-threshold-tradeoff} enumerates every distinct
threshold-induced policy, including zero and both constant policies;
triangles mark the highest retrospective success. The operating points are checked against
the main table. The package records input hashes, seeds, and commands for
rebuilding the curves and figures. The historical diagnostics include
eligibility counts, family holdouts, risk--coverage curves, local intervals,
and threshold sweeps. Archived catalog-policy comparisons use a separate
calibration regime with a 3\% absolute success-rate margin and remain separate
from the canonical policy estimates.

\section{Task--Skill Structure and Support Coverage}
\label{app:task-skill-structure}

\Cref{tab:benchmark-design} lists the task and skill inventories. All three
agents share the audited task IDs and supplied conditions, so their
task--skill structure is counted once below.

\paragraph{Single-skill families.}
ToolQA has 100 tasks for each of 13 skills and 130 for one; MedCalc-Bench has
20 tasks per skill and LogicBench has 40. Their analysis groups correspond
to the supplied skills. For the within-family predictors, group size limits
the available support.

\paragraph{Bundles and sparse groups.}
BigCodeBench injects all annotated skills together, forming 575 distinct
bundles. Its 3,140 annotation edges give each task two to six supplied skills;
these edges record membership, not separate skill interventions. The 86 focal
analysis groups have median size three, including 29 singletons. Consequently,
a large inventory can still offer little support within a particular group.

This structure helps interpret the selection decomposition in
\cref{app:selection-components}. A singleton has no within-group support
after target exclusion. Its within-group selection term is also exactly
zero, since its action equals the group's use rate; the same holds for
any group whose tasks all receive the same action. Small groups therefore
limit both reusable evidence and opportunities for within-group selection.
Group membership also does not fix the entire injected bundle, so eligible
neighbors can receive different skill combinations. These features are
consistent with the weak within-group results, but the present comparison
does not isolate sparsity from bundle heterogeneity or representation.

\paragraph{One artifact across task types.}
SpreadsheetBench uses one shared skill for 274 cell-level and 125 sheet-level
tasks. The two analysis groups distinguish task types, not different skills.

\paragraph{Observed help and harm.}
As an example, the ToolQA profiles contain 265 helpful and 35 harmful Qwen
pairs, and 177 helpful and 86 harmful GLM pairs; the remainder are ties.
These single-pair observations include execution noise, as discussed in
\cref{app:no-free-lunch}.

\section{Generalization Stress Tests}
\label{app:generalization-stress}

These Qwen-Turbo panels evaluate unseen tasks and shifted task mixtures,
complementing the main retrospective inventories. All predictions were
serialized before the corresponding target outcomes were collected.
They retain their historical configurations, distinct from the current $k=6$,
zero-threshold main analysis.
\Cref{tab:generalization-stress} distinguishes the target constructions and
sequential follow-ups.

\begin{table}[H]
  \centering
  \caption{Prospective generalization stress tests. Success gain is the absolute
  difference between \textsc{SkillDelta} and Always-off; tokens saved are relative to Always-on.
  The panels differ in target construction.}
  \label{tab:generalization-stress}
  \scriptsize
  \setlength{\tabcolsep}{3.5pt}
  \begin{tabular}{@{}lrrrrrr@{}}
    \toprule
    Transfer axis & Known & Target & AUROC & \shortstack{Balanced\\accuracy}
      & \shortstack{Success\\gain} & \shortstack{Tokens\\saved} \\
    \midrule
    ToolQA, unseen tasks within 8 families
      & 96 & 160 & .753 & .689 & +20.00\% & 3.62\% \\
    MedCalc, new tasks across 55 families
      & 440 & 660 & .719 & .667 & +16.21\% & 32.35\% \\
    BigCode, scale follow-up (20/family)
      & 200 & 150 & .716 & .663 & +2.00\% & 64.43\% \\
    BigCode, natural remaining distribution
      & 350 & 200 & .502 & .572 & +7.00\% & 32.88\% \\
    \bottomrule
  \end{tabular}
  \par\vspace{5pt}
  \begin{minipage}{0.96\linewidth}
    \footnotesize
    \textit{Note.} Known is the number of paired support tasks used to freeze
    the predictor; Target is the outcome-unseen audit panel. BigCode scale is a
    sequential follow-up after a 12-task/family pilot; the natural panel is a
    later remaining-population extension.
  \end{minipage}
\end{table}

\paragraph{Frozen predictor settings.}
All four panels use question-only \texttt{text-embedding-3-small} vectors and
nonnegative-cosine-weighted signed gains. ToolQA and MedCalc use same-family
$k=3$ neighbors, with thresholds $0.25$ and $150/440\approx0.341$.
BigCode scale averages all 20 supports per focal family, with threshold $0.105$.
The natural panel uses all 35 supports in an available focal family, otherwise
the global 240 nearest supports, with threshold $-0.0258794$.
The first three thresholds are the support positive-gain prevalences;
the last maximizes leave-one-out balanced accuracy on support tasks, breaking
ties toward fewer skill uses. Each threshold was fixed before target execution.

\paragraph{Matched and broader task mixtures.}
ToolQA retains the support families; its success gain is positive, while
the token-saving interval crosses zero. MedCalc changes the task inventory
and calculator-family mixture under the same agent. Both provide prospective
evidence within supported task mixtures.

\paragraph{Support expansion and distribution shift.}
The BigCode scale panel follows a pilot, so its evidence is sequential.
The natural-distribution extension retains a success gain over Always-off
while task-level ranking falls to chance. This shows why improvement over
skipping and gain ranking must be assessed separately under shift.

The archived ToolQA frozen-action re-execution concerns trajectory stability
on reused tasks and is kept separate from these new-task panels. None of
these panels evaluates unseen skills.

\end{document}